\newcommand{\bX}{\mathbf{X}}
\newcommand{\bS}{\mathbf{S}}
\newcommand{\bSigma}{\mathbf{\Sigma}}
\newcommand{\bW}{\mathbf{W}}
\newcommand{\bY}{\mathbf{Y}}
\newcommand{\bB}{\mathbf{B}}
\newcommand{\bU}{\mathbf{U}}
\newcommand{\bV}{\mathbf{V}}
\newcommand{\bQ}{\mathbf{Q}}
\newcommand{\bH}{\mathbf{H}}
\newcommand{\bm}{\mathbf{m}}
\newcommand{\bI}{\mathbf{I}}
\newcommand{\bM}{\mathbf{M}}
\newcommand{\bs}{\mathbf{s}}
\newcommand{\bx}{\mathbf{x}}
\newcommand{\bZ}{\mathbf{Z}}
\newcommand{\calF}{\cal{F}}
\newcommand{\tz}{\tilde{z}}
\newcommand{\bTheta}{\boldsymbol{\Theta}}
\NewDocumentCommand{\rot}{O{60} O{1em} m}{\makebox[#2][l]{\rotatebox{#1}{#3}}}
\DeclareMathOperator*{\argmax}{arg\,max}
\newtheorem{proposition}{Proposition}
\newtheorem{proof}{Proof}
\def\BibTeX{{\rm B\kern-.05em{\sc i\kern-.025em b}\kern-.08em
    T\kern-.1667em\lower.7ex\hbox{E}\kern-.125emX}}
\begin{document}

\title{Classification EM-PCA for clustering and embedding}



\author{Zineddine Tighidet\IEEEauthorrefmark{3}\IEEEauthorrefmark{5}\IEEEauthorrefmark{1}\thanks{\textbf{*} Work done during time at Centre Borelli, Université Paris Cité.}, 
Lazhar Labiod\IEEEauthorrefmark{4}, 
Mohamed Nadif\IEEEauthorrefmark{4}
    \IEEEauthorblockA{\\
        \IEEEauthorrefmark{3}Sorbonne Université, CNRS, ISIR, Paris, France  \IEEEauthorrefmark{5}BNP Paribas, Paris, France \\
        \IEEEauthorrefmark{4}Centre Borelli UMR 9010, Université Paris Cité, France \\
        \normalsize{
            \textit{firstname.lastname@}\{\textit{sorbonne-universite.fr, u-paris.fr}\}
        }
    }
}

\maketitle

\begin{abstract}
The mixture model is undoubtedly one of the greatest contributions to clustering. For continuous data, Gaussian models are often used and the Expectation-Maximization (EM) algorithm is particularly suitable for estimating parameters from which clustering is inferred. If these models are particularly popular in various domains including image clustering, they however suffer from the dimensionality and also from the slowness of convergence of the EM algorithm. However,  the Classification EM (CEM) algorithm, a classifying version, offers a fast convergence solution while dimensionality reduction still remains a challenge. Thus we propose in this paper an algorithm combining simultaneously and non-sequentially the two tasks --Data embedding and Clustering-- relying on \emph{Principal Component Analysis} (PCA) and CEM. We demonstrate the interest of such approach in terms of clustering and data embedding. We also establish different connections with other clustering approaches.\footnote{This work has been accepted for publication in IEEE Conference on Big Data et the Special Session on Machine Learning. The final published version will be available via IEEE Xplore.}
\end{abstract}

\begin{IEEEkeywords}
Clustering, Gaussian Mixture models, CEM algorithm, PCA, data embedding
\end{IEEEkeywords}
\section{Introduction}
\IEEEPARstart{N}{owadays}, many real-world data sets are high-dimensional. In order to reduce the dimensionality a manifold learning technique can be used to map a set of high-dimensional data into a low-dimensional space, while preserving the intrinsic structure of the data. Principal component analysis ({\tt PCA}) \cite{pca} is the most popular linear approach.
In nonlinear cases, however, there are other more efficient approaches to be found. A number of techniques have been proposed, including Multi-Dimensional Scaling (MDS) \cite{MDS}, Isometric Feature Mapping (ISOMAP) \cite{ISOMAP}, Locally Linear Embedding (LLE) \cite{LLE}, Locally Preserving Projections (LPP) \cite{LPP}, and Stochastic Neighbor Embedding (SNE) \cite{SNE}; we refer the interested reader to \cite{Engel_12}. Unfortunately, these nonlinear techniques tend to be extremely sensitive to noise, sample size, the choice of neighborhood, and other parameters; see for instance \cite{Gittins_12,Van_09,Engel_12}.
In the context of deep learning, where data embedding is referred to as data representation,  an auto-encoder can learn a representation (or {\it encoding}) for a set of data. If linear activations are used, or if there is only a single sigmoid hidden layer, then the optimal solution for an auto-encoder is strongly related to Principal Component Analysis ({\tt PCA}) \cite{hinton2006reducing}.

In data science, data embedding (DE) is commonly used for the purposes of visualizing, but it can also play a significant role in clustering, where the aim is to divide a dataset into homogeneous clusters. Working with a low-dimensional space can be useful when partitioning data, and a number of approaches are reported in the literature with applications in various fields.

\begin{figure}[!h]\begin{center}\includegraphics[scale=0.42]{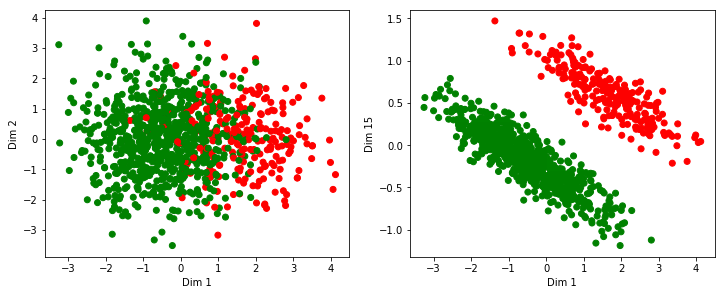} 
\caption{{\tt PCA} projection of the {\tt Chang} dataset onto the plan spawned by the first and second principal components (left) and the plan spawned by the first and fifteenth principal components (right).} 
\label{fig:chang_2d_viz}
\end{center}
\vspace{-0.3cm}
\end{figure}
A popular method to achieve this is the use of principal component analysis ({\tt PCA}), which reduces the dimensionality of data while retaining the most relevant information, followed by any clustering algorithm. 
Although such an approach has been successfully applied sequentially in many applications \cite{pca-gmm}, they present some drawbacks. This is because the first components do not necessarily capture the clustering structure. Chang \cite{chang1983using} discussed a simulated example from a 15-dimensional mixture model into 2 clusters to show the failure of principal components as a method for reducing the dimension of the data before clustering. With the scheme described in \cite{chang1983using} the first 8 variables can be considered roughly as a block of variables with the same correlations, while the rest of the variables form another block.  
Thereby, from the simulated continuous data of size $1000 \times 15$ into 2 clusters we observe that the plan spawned by the two-first components, obtained by {\tt PCA}, does not reveal the two classes Figure~(\ref{fig:chang_2d_viz}, left). However, the plan spawned by the last and the first components perfectly discerns the two classes as shown in Figure~(\ref{fig:chang_2d_viz}, right). Therefore the idea of taking into account only the first components to perform a clustering is not always effective.


In our proposal, we have chosen to rely on the mixture model approach for its flexibility. Thereby, we present an approach that simultaneously uses the {\tt PCA} and the Expectation-Maximization-Type algorithm ({\tt EM}) \cite{dempster1977maximum,mclachlan2007algorithm} that inserts a classification step, referred to as Classification {\tt EM} ({\tt CEM} \cite{celeux1995gaussian}). The derived algorithm called {\tt CEM-PCA} we propose can be viewed as a regularized dimension reduction method.
This regulation will be beneficial to the reduction of the dimension while taking into account the clustering structure to be discovered. In this way, it simultaneously combines the two data tasks --data embedding and clustering--. As reported in Figure \ref{fig:PCA-kmeans-CEM-15-PCs}, we observe the interest of such approach on simulated data. The sub-figure on the left illustrates the obtained clustering by {\tt K-means} \cite{macqueen1967classification} applied on the 15 principal components arising from {\tt PCA}. The sub-figure on the right represents the clusters obtained by {\tt CEM-PCA} that generates a data embedding $\bB$. {\tt CEM-PCA} successfully separates the two classes and results in perfect accuracy (Accuracy=100\%), as opposed to {\tt PCA} followed by {\tt K-means} (Accuracy=78\%). 
%
\begin{figure}[!h]
\centering
\begin{tabular}{cc}
\includegraphics[scale=0.3]{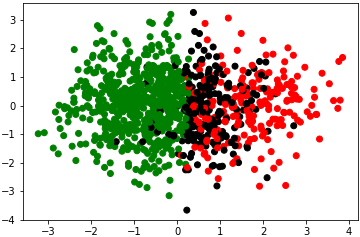} &
  \includegraphics[scale=0.3]{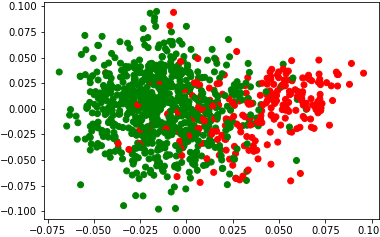} \\
\end{tabular}
\caption{Comparison between the clustering of {\tt K-means} (left) and {\tt CEM-PCA} (right) on Chang data using respectively the components arising from {\tt PCA} for {\tt K-means} and the data embedding $\bB$ obtained by {\tt CEM-PCA}. Black points represent 22\% misclassified objects.}\label{fig:PCA-kmeans-CEM-15-PCs}
\end{figure}

The paper is structured as follows: Section~\ref{sec:related_work_motivation} describes related work and methods to which we compare our approach, Section~\ref{sec:proposal} presents {\tt CEM-PCA} and all the algorithms that are used such as {\tt EM} and {\tt CEM} algorithms. Section~\ref{sec:optimization_algorithm} covers the optimization aspect of {\tt CEM-PCA} as well as the algorithm itself and the complexity analysis. In Section~\ref{sec:experiments} we present the experimental evaluation of our approach, including the results and comparisons with other state-of-the-art methods. In Section~\ref{sec:relation_soa}, we establish connections between {\tt CEM-PCA} and other state-of-the-art methods and finally, we conclude and discuss the potential applications and future directions of the proposed approach.

\section{Related work and Motivation} \label{sec:related_work_motivation}
In this section, we describe several existing methods that are related to our proposed {\tt CEM-PCA} approach:

\begin{itemize}
    \item {\tt Reduced K-means} \cite{clusPCA,Yamamoto_14} combines {\tt PCA} for dimension reduction with {\tt K-means} for clustering.
    \item {\tt EM-GMM} \cite{banfield1993model,Reynolds2009GaussianMM} estimates the parameters of a Gaussian mixture models. The clusters are inferred at the convergence using the maximum a posterior principle. Often wrongly referenced by GMM when it is the {\tt EM} algorithm \cite{dempster1977maximum} estimating the parameters of a Gaussian mixture model.
    \item {\tt PCA-GMM} \cite{pca-gmm} proposes a reduction of the dimensionality of the data in each component of the model by principal component analysis. To learn the low-dimensional parameters of the mixture model they use {\tt EM} algorithm whose M-step requires the solution of constrained optimization problems. 
    \item {\tt Deep-K-means} \cite{fard2020deep} is a jointly clustering with {\tt K-Means} and learning representations by considering the {\tt K-Means} clustering loss as the limit of a differentiable function. 
    \item {\tt Deep-GMM} \cite{viroli2017deep} provides a generalization of classical Gaussian mixtures to multiple layers. Each layer contains a set of latent variables that follow a mixture of Gaussian distributions. To avoid over parameterized solutions, dimension reduction is applied at each layer using factorial models.
\end{itemize}

\section{Our Proposal} \label{sec:proposal}
Before proposing an optimization of an objective function taking into account the simultaneity of the two tasks --data embedding and clustering--, we briefly review the Gaussian mixture model and the EM and CEM algorithms traditionally used for such approach.

\subsection{Gaussian Mixture Model}
The mixture model is undoubtedly one of the greatest contributions to clustering \cite{mclachlan2019finite}. With a finite Gaussian Mixture Model ({\tt GMM}) \cite{Reynolds2009GaussianMM}, the data $\bX=(\bx_1, \ldots, \bx_n)$ are considered as constituting a sample of $n$ independent instances of a random variable $\cal{X}$ in $\mathbb{R}^d$. The probability density function can be expressed as follows:
$$f(\bx_i;\bTheta)=\sum_{k=1}^g \pi_k \varphi_k(\bx_i|\mu_k,\bSigma_k),  \forall i \in \{1,\ldots,n\}$$
where,\\

$\bTheta=(\pi_1,\ldots,\pi_g,\mu_1,\ldots,\mu_g,\bSigma_1,\ldots,\bSigma_g)$. The $\pi_k$'s are the weights or mixing probabilities (such as $\pi_k>0$, $\sum_{k=1}^g \pi_k=1$) and $g$ is the number of components in the mixture.
Thus, the classes are ellipsoidal, centered on the mean vector $\mu_k$, and with other geometric characteristics, such as volume, shape and orientation, determined by the spectral decomposition of covariance matrix $\bSigma_k$ \cite{banfield1993model, celeux1995gaussian}. 
To estimate $\bTheta$, we rely on the log-likelihood maximization given by:
$$
L(\bX;\Theta)
=\sum_{i=1}^n \log \left( \sum_{k=1}^g \pi_k \varphi_k(\bx_i|\mu_k,\bSigma_k)\right).
$$
%
Maximization is usually performed by expectation maximization ({\tt EM}) \cite{dempster1977maximum}; an iterative algorithm based on maximizing the conditional expectation of the log-likelihood of the complete data given $\Theta'$:
$$Q(\Theta|\Theta')=\sum_i\sum_k \tz_{ik}\log(\pi_k \varphi_k(\bx_i|\mu_k,\bSigma_k))$$

where $\tz_{ik}\propto \pi_k \varphi_\ell(\bx_i|\mu_k,\bSigma_k)$ are the posterior probabilities. More specifically, the algorithm is decomposed into two steps (steps \textbf{E}-\textbf{M}) and the unknown parameters of $\Theta$ are updated thanks to the probabilities computed previously. For each component $k$, we have:
\label{parameters}
\[
\pi_k=\frac{\sum_i \tz_{ik}}{n}, \quad \mu_k=\frac{\sum_i \tz_{ik}\bx_i}{\sum_i \tz_{ik}},
\]
\[
\mbox{ and } \bSigma_k=\frac{\sum_i \tz_{ik}(\bx_i-\mu_k)^\top(\bx_i-\mu_k)}{\sum_i s_{ik}}
\]
At convergence, we can deduce a hard clustering using the principle of the \emph{maximum a posterior} by 
\begin{equation}\label{formule:Z}
    z_{ik} \leftarrow \argmax_k \tz_{ik}, \forall i \in \{1,\ldots,n\}.
\end{equation}

\begin{algorithm}
    \caption{{\tt EM-GMM}}\label{alg:gmm}
    \label{alg:gmm_clust}
    \begin{algorithmic}
        \STATE {\bfseries Input:} $\bX \leftarrow (\bx_1,\dots, \bx_n)$, $g$ the number of components;\\
        \STATE {\bfseries Initialization:} initialize $\Theta^\prime$  from a partition obtained with $K$-means;\\
        \REPEAT
            \STATE {\bf E-step}: compute $Q(\Theta|\Theta')$; \\
            \STATE {\bf M-step}: update $\pi_k$, $\mu_k$ and $\bSigma_k$ according to (\ref{parameters});\\
        \UNTIL convergence
        \RETURN $\bSigma_k$'s, $\mu_{k}$'s, $\pi_{k}$'s and $\bZ$ using \ref{formule:Z}
        
    \end{algorithmic}
\end{algorithm}
\subsection{Classification {\tt EM} ({\tt CEM}) Algorithm}
In order to use the model in a clustering context, we want to jointly infer $\bZ$ and learn the model parameters. To this end, we rely on the Classification Expectation-Maximization algorithm \cite{celux1992}, referred to as {\tt CEM}. It consists of inserting a C-step between E-step and M-step. This leads for maximizing
the complete-data log-likelihood given by $$L(\bX,\bZ;\Theta)=\sum_{i=1}^n\sum_{k=1}^g z_{ik} \log \pi_k\varphi_{k}(\bx_{i}|\mu_k,\bSigma_k).$$ With {\tt CEM} this maximization requires fewer iterations and is a simplified version of {\tt EM} capable of handling large data sets.
\begin{algorithm}
    \caption{{\tt CEM}}\label{alg:gmm}
    \label{alg:gmm_clust}
    \begin{algorithmic}
        \STATE {\bfseries Input:} $\bX \leftarrow (\bx_1,\dots, \bx_n)$, $g$ the number of components;\\
        \STATE {\bfseries Initialization:} initialize $\Theta^\prime$  from a partition obtained with $K$-means;\\
        \REPEAT
            \STATE {\bf E-step}: compute $Q(\Theta|\Theta')$; \\
            \STATE {\bf C-step}: $z_{ik} \leftarrow \argmax_k \tz_{ik}, \forall i \in \{1,\ldots,n\}$; \\
            \STATE {\bf M-step}: update $\pi_k$, $\mu_k$ and $\bSigma_k$ according to (\ref{parameters}) where $\tz_{ik} \leftarrow z_{ik}$;\\
        \UNTIL convergence
        \RETURN $\bZ$, $\bSigma_k$'s, $\mu_{k}$'s and  $\pi_{k}$'s.
    \end{algorithmic}
\end{algorithm}

\subsection{Objective function}
The data are represented by a matrix $\bX=(x_{ij})$ of size $n \times d$ where the $\bx_{i}=(x_{i1},\ldots,x_{id})^\top$ are assumed to be sampled from a given parametric distribution of density $\varphi$. The value of each entry in the data matrix depends on the latent row of model parameters. The partition of the set of rows into $g$ classes is represented by the latent classification matrix $\bZ=(z_{ik})$, with $\sum_{k=1}^g z_{ik}=1$, where $z_{ik} = 1$ if row $i$ belongs to the $k$\textit{th} class of rows and $z_{ik}=0$ otherwise. We also write $z_i \in \{1,\ldots, g\}$ as the index of the class of $i$.
We consider the problem of dimension reduction and clustering simultaneously. The idea is to combine the {\tt PCA} and the {\tt CEM} via a regularization. In this way, we propose the following objective function to be minimized:
\begin{align}\label{objective_func}
{\calF}(\bX;\bQ, \bZ,\bSigma,\bS)&=|| \bX-\bB\bQ^\top||^2 +\delta ||\bB-\bM||^2 \nonumber \\&- \sum_{i=1}^n\sum_{k=1}^g z_{ik}\log(\pi_k \varphi(\bm_{i},(\bs_{k},\bSigma_{k})) 
\end{align}
with:
\begin{itemize}
    \item $\bX \in \mathbb{R}^{n \times d}$
    \item $\bQ=\bX^\top\bB \in \mathbb{R}^{d \times p}$
    \item $\bB \in \mathbb{R}^{n \times p}$ is an orthonormal matrix in columns
    \item $\bs_k=(s_{k1},\ldots,s_{kp})$ the centroid of the $k$th class. Then the centroid matrix $\bS \in \mathbb{R}^{g \times p} $
    \item $\bZ \in \{0,1\}^{n \times g}$ clustering matrix
    \item $\varphi(\bm_{i},(\bs_{k},\bSigma_{k}))$ is a distribution matrix in $\bM \in \mathbb{R}^{n \times p}$
    \item The $\pi_k$ the apriori probabilities of the mixture
\end{itemize}

The diagram in Figure~\ref{fig:schema} summarizes in a simplified way the proposed method. The first term of the objective function corresponds to PCA for data embedding while the third term concerns CEM clustering task on the PCA embedding. Note that both tasks – PCA embedding and CEM clustering – are performed simultaneously and supported by the second term ${\left \| \bB - \bM \right \|}^{2} $; this is the key to achieve good embedding while taking the clustering structure into account.

\begin{figure*}
\begin{center} 
\includegraphics[height=5cm, width=17.5cm]{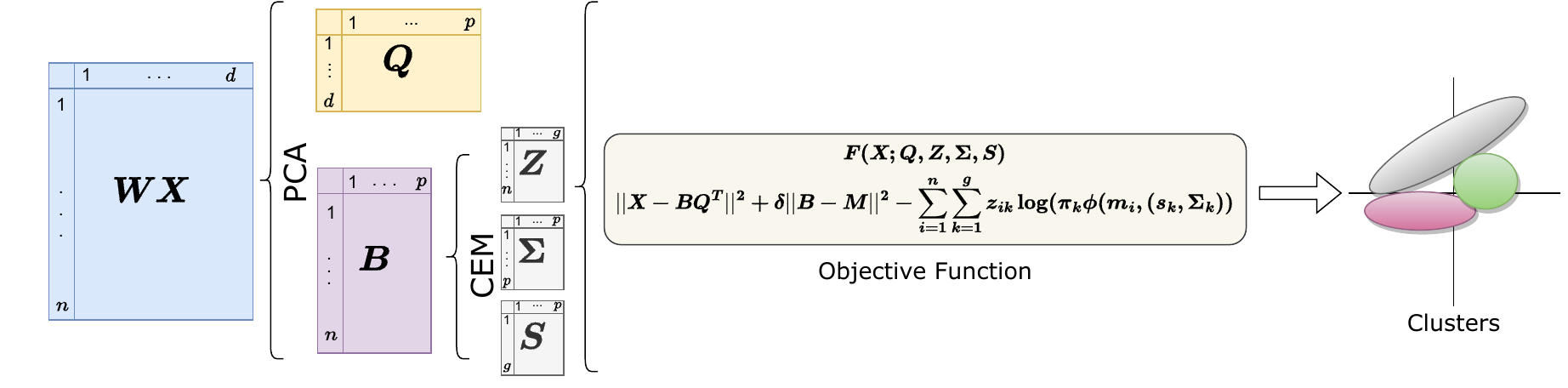} 
\caption{Diagram illustrating the steps of the proposed algorithm ({\tt CEM-PCA}). }
\label{fig:schema}
\end{center}
\end{figure*}

\subsection{Graph Laplacian Regularization}
\label{regul_WX}

Although {\tt PCA} is effective at providing an embedding for data on a linear manifold, it may not be suitable for non-linear manifold applications. To address this issue, the graph Laplacian-based embedding method is often utilized. To implement this method, we begin by constructing a \textit{k}-nearest neighbor data graph using the $n$ data samples $\{\bx_{1},\ldots,\bx_{n}\}$ as vertices. The data weight matrix $\bW$ is defined as follows,

\[
\bW_{ij} = \begin{cases}
            e^{-||\bx_{i} - \bx_{j}||^2}, & \mbox{if } \bx_{j} \in \mathcal{N}(\bx_{i}); i \neq j \\ 
            0 & \mbox{otherwise},
        \end{cases}
\]

where $\mathcal{N}(\bx_{i})$ represents the set of \textit{k}-nearest neighbors of
$\bx_{i}$. Laplacian embedding \cite{neurips_laplacian_emb}, \cite{siam_laplacian_emb} preserves the local geometrical relationships and maximizes the smoothness with respect to the intrinsic manifold of the data set in the low embedding space. We also normalize the resulting matrix $\bW$ by row.

In the subsequent sections, we consider that the matrix $\bX$ is the outcome of the matrix multiplication between the input data and the matrix $\bW$, denoted as $\bX \leftarrow \bW\bX$. We can repeat the operation $m$ times in order to accentuate the smoothing with respect to the manifold of $\bX$ in the low dimensional space: $\bX \leftarrow \bW^m\bX$.

\section{Optimization and Algorithm} \label{sec:optimization_algorithm}
The estimation of the parameters is performed in an alternative way, allowing a mutual reinforcement between the dimensionality reduction task and the clustering task performed on a reduced size matrix.

\subsection{ Optimization}
\label{Optimization}

To solve (\ref{objective_func}), we use an alternated iterative method.

\subsubsection{Computation of $\bQ$ }
First, fixing $\bB$, by setting the derivative of the first term in (\ref{objective_func}) with respect to $Q$ as $0$, we obtain:
\begin{equation}
 \bQ =  \bX^\top \bB
\label{eq:Q}
\end{equation}

\subsubsection{Computation of $\bB$}
Second, given $\bQ$ and $\bM$, we can rewrite (\ref{objective_func}) as:
\begin{equation}\label{eq:B}
\min_{\bB^\top \bB = \bI}  {\left \| \bX-\bB \bQ^\top \right \|}^{2} + \delta {\left \| \bB - \bM \right \|}^{2} . 
\end{equation}
To solve (\ref{eq:B}) we rely on the following proposition.\\

\begin{proposition} \label{TH1}
Let $\bX_{n \times d}$ and $\bQ_{d \times k}$  and $\bM_{n \times k}$ be three matrices. Consider the constrained optimization problem
\begin{eqnarray}
\bB_{*} &=&\arg\min_{\bB} \left \| \bX - \bB \bQ^\top \right \|^{2} +  \delta {\left \| \bB - \bM \right \|}^{2} \mbox{ s.t } \bB^\top \bB = \bI \label{eq:TH00}\nonumber \\
 &=&\arg\max_{\bB} \ \mbox{Tr}((\bX \bQ^+\delta \bM)\bB^\top) \quad \mbox{s.t} \quad \bB^\top \bB = \bI
\label{eq:TH1}
\end{eqnarray}

The solution of Eq. (\ref{eq:TH1}) comes from the singular value decomposition ({\tt SVD}) of  $(\bX \bQ^+\delta \bM)$. Let $U D V^\top$ be the {\tt SVD} for $(\bX \bQ^+\delta \bM)$, then $\bB_{*} = UV^\top$.\\

\end{proposition}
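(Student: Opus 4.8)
The plan is to first reduce the constrained minimization to the trace-maximization form asserted in the second line of the proposition, and then to recognize that problem as an orthogonal Procrustes problem whose optimizer is delivered by the {\tt SVD}.

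First I would expand both Frobenius norms using $\|\mathbf{A}\|^2 = \mathrm{Tr}(\mathbf{A}^\top\mathbf{A})$. In $\|\bX - \bB\bQ^\top\|^2$ the term that is quadratic in $\bB$ is $\mathrm{Tr}(\bQ\bB^\top\bB\bQ^\top)$, which collapses to the constant $\mathrm{Tr}(\bQ\bQ^\top)$ precisely because of the constraint $\bB^\top\bB = \bI$; the term $\mathrm{Tr}(\bX^\top\bX)$ is likewise a constant. In $\delta\|\bB - \bM\|^2$ the pieces $\delta\,\mathrm{Tr}(\bB^\top\bB) = \delta\,\mathrm{Tr}(\bI)$ and $\delta\,\mathrm{Tr}(\bM^\top\bM)$ are again constant. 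Using the cyclic invariance of the trace, the two remaining cross terms combine into $-2\,\mathrm{Tr}((\bX\bQ)\bB^\top)$ and $-2\delta\,\mathrm{Tr}(\bM\bB^\top)$. Discarding the $\bB$-independent constants therefore turns the minimization into $\max_{\bB^\top\bB=\bI}\mathrm{Tr}((\bX\bQ + \delta\bM)\bB^\top)$, which is exactly (\ref{eq:TH1}).

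Next, writing $\mathbf{C} = \bX\bQ + \delta\bM$ and inserting its (thin) {\tt SVD} $\mathbf{C} = \bU\bD\bV^\top$, I would apply the cyclic property once more to obtain $\mathrm{Tr}(\mathbf{C}\bB^\top) = \mathrm{Tr}(\bD\,\bV^\top\bB^\top\bU) = \mathrm{Tr}(\bD\mathbf{N}^\top)$, where $\mathbf{N} := \bU^\top\bB\bV$. Since $\bD$ is diagonal with nonnegative singular values $d_i$, this equals $\sum_i d_i N_{ii}$. The heart of the argument is then the bound $N_{ii}\le 1$: each diagonal entry is $N_{ii} = \bu_i^\top(\bB\bv_i)$, and because $\bB^\top\bB = \bI$ the image $\bB\bv_i$ is a unit vector, so Cauchy--Schwarz gives $|N_{ii}|\le \|\bu_i\|\,\|\bB\bv_i\| = 1$. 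Hence $\mathrm{Tr}(\mathbf{C}\bB^\top)\le \sum_i d_i$, a value attained when $N_{ii}=1$ for every $i$; the equality case of Cauchy--Schwarz then forces $\bB\bv_i = \bu_i$ for all $i$, i.e. $\bB\bV = \bU$, equivalently $\bB_{*} = \bU\bV^\top$. A direct check confirms feasibility, $(\bU\bV^\top)^\top(\bU\bV^\top) = \bV\bU^\top\bU\bV^\top = \bI$.

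I expect the only delicate point to be the bookkeeping around dimensions and degeneracies: one must use the thin {\tt SVD}, so that $\bU \in \mathbb{R}^{n\times k}$ has orthonormal columns and $\bV$ is $k\times k$ orthogonal, and observe that the optimizer is unique only up to repeated or vanishing singular values (if $\mathbf{C}$ is rank-deficient, the components of $\bB$ lying in the null space of $\mathbf{C}^\top$ are left unconstrained by the objective). The algebraic reduction of the first two paragraphs is routine; the essential content is this Procrustes / von Neumann trace inequality certifying that $\bU\bV^\top$ is optimal.
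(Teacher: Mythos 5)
Your proof is correct and follows essentially the same route as the paper: expand the Frobenius norms, use the constraint $\bB^\top\bB=\bI$ to discard the $\bB$-quadratic terms and reduce to $\max_{\bB^\top\bB=\bI}\mathrm{Tr}((\bX\bQ+\delta\bM)\bB^\top)$, then bound this trace via the SVD of $\bX\bQ+\delta\bM$ and Cauchy--Schwarz, with the bound $\sum_i d_i$ attained at $\bB_*=\bU\bV^\top$. If anything, your version is slightly tighter than the paper's: you invoke the Cauchy--Schwarz equality case to \emph{force} $\bB\bV=\bU$ rather than merely exhibiting the maximizer, and you get the thin-SVD dimensions right ($\bU\in\mathbb{R}^{n\times k}$, where the paper's proof writes $d\times k$).
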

\begin{proof}
We expand the matrix norm $$\left \| \bX - \bB \bQ^\top \right \|^{2} +  \delta {\left \| \bB - \bM \right \|}^{2}.$$
We have
\begin{align*}
&Tr(\bX^\top \bX) - 2 Tr(\bX^\top \bB \bQ^\top)
+ Tr(\bQ \bB^\top \bB \bQ^\top) \nonumber \\
&+ \delta \Big( Tr(\bB^\top \bB)+Tr(\bM^\top \bM)
-2Tr(\bB^\top \bM)\Big).
\label{eq:TH11}
\end{align*}
Since $\bB^\top \bB = \bI$, the original minimization problem (\ref{eq:TH00}) is equivalent to maximize the  term
\begin{equation}
\max_{\bB^\top \bB = \bI} Tr((\bX \bQ+\delta \bM) \bB^\top).
\label{eq:TH2}
\end{equation}
With the {\tt SVD} of $(\bX \bQ+\delta \bM)$ = $UDV^\top$, this middle term becomes
\begin{eqnarray}
Tr((\bX \bQ+\delta \bM) \bB^\top) &=& Tr(U D V^\top \bB^\top) \nonumber \\
                    &=& Tr(U D\hat{\bB}^\top) \quad \mbox{where} \quad \hat{\bB}=\bB V \nonumber \\
                    &=&Tr(\hat{\bB}^\top U D). \label{th22}
\end{eqnarray}
Denoting 
\begin{itemize}
    \item[-] $U=[\mathbf{u}_1|\ldots|\mathbf{u}_k]\in \mathbb{R}^{d \times k}$,
    \item[-] $D=Diag(d_1,\ldots,d_k) \in \mathbb{R}_{+}^{k \times k}$,
    \item[-] $\hat{\bB}=[\mathbf{\hat{b}}_1|\ldots|\mathbf{\hat{b}}_k] \in \mathbb{R}^{d \times k}$.
\end{itemize}
Applying the Cauchy-Shwartz inequality and since $U^\top U=\bI$, $\hat{\bB}^\top \hat{\bB} = \bI$ due to $VV^\top=\bI$, we have 
$$Tr(\hat{\bB}^\top U D) \leq \sum_{i}d_i ||\mathbf{u}_i||\times||\mathbf{\hat{b}}_i||=\sum_i d_i= Tr(D).$$ 
Then the upper bound is clearly attained by setting $\hat{\bB}= U$. This leads to $\hat{\bB}=\bB V= U$ and $\bB VV^\top= UV^\top$. Hence we obtain $ \bB_* = U V^\top.$ 

\end{proof}

\subsubsection{Computation of $\bM$ }
Fixing $\bQ, \bZ,\bSigma,\bS$ and $\bQ$, the objective function with respect to $\bM$ is 

\begin{align}\label{ojective_M}
\delta ||\bB-\bM||^2 - \sum_{i=1}^n\sum_{k=1}^g z_{ik}\log(\pi_k \varphi(\bm_{i},(\bs_{k},\bSigma_{k})) 
\end{align}

We can now derive the updated equation of $\bm_i$ in closed form as follow:

\begin{equation}\label{eq.MM}
\bm_i=\Big(\sum_{k=1}^g z_{ik}\bSigma_{k}^{-1}+\delta \bI\Big)^{-1}\Big(\delta b_i+\sum_{k=1}^g z_{ik}\bSigma_{k}^{-1}\bs_{k}\Big).
\end{equation}

\subsubsection{Computation of $\bs_k,\bSigma_k $ and $\pi_k$ }
The updates of these parameters are the same as those described in the {\tt CEM} algorithm.

\subsection{ Algorithm}

In summary, the steps of the proposed {\tt CEM-PCA} algorithm
can be deduced in Algorithm\ref{alg:model_based_co_clustering:LBCEM}. The code of the {\tt CEM-PCA} algorithm is available on GitHub for reproducibility~\footnote{\href{https://github.com/Zineddine-Tighidet/CEM-PCA}{https://github.com/Zineddine-Tighidet/CEM-PCA}}

\begin{algorithm}
  \caption{{\tt CEM-PCA}}
  \label{alg:model_based_co_clustering:LBCEM}
  \begin{algorithmic}
    \STATE {\bfseries Input:} $\bX$, $g$ number of classes, $p$ the number of latent dimensions, $\delta$ regularization parameter, $k$ the number of neighbors to consider in the Laplacian graph (as described in section~\ref{regul_WX}).
    \STATE {\bfseries Initialization:} Compute $\bB$ and  $\bQ$ using {\tt PCA}, $\bZ$, $\bSigma_{k}$'s and $\bS$, with a {\tt CEM} applied on $\bB$
    \REPEAT
    \STATE {\bfseries Step 1.} Compute $\bM$ using \ref{eq.MM}.
          \STATE {\bfseries Step 2.} Update  $\bZ, \bS, \bSigma_k$'s and the $\pi_k$'s using {\tt CEM}.
          \STATE {\bfseries Step 3.} Compute $\bB=\bU\bV^\top$ where $$\bU \Delta \bV^\top=\text{svd}(\bX\bQ+\delta \bM),$$
        \STATE {\bfseries Step 4.} Compute $\bQ=\bX^\top\bB$ 
    \UNTIL convergence
    \RETURN $\bZ$, $\bB$, $\bM$, $\bSigma_k$'s and  $\pi_k$'s.
  \end{algorithmic}
\end{algorithm}


\subsection{Complexity Analysis}

In this section we analyze the complexity of each step for each iteration of the {\tt CEM-PCA} algorithm.

\textbf{Step 1.} and \textbf{Step 2.} estimate $\bQ$, $\bZ$, $\bM$, $\bS$, and $\bSigma$ which can be done using the {\tt EM} algorithm based on a {\tt GMM}. We can use a more efficient version of {\tt EM} algorithm for {\tt GMM} called Incremental Gaussian Mixture Network ({\tt IGMN}) \cite{DBLP:journals/corr/PintoE15} that reduces the complexity to $O(nkp^2)$.

\textbf{Step 3.} and \textbf{Step 4.} estimate $\bB$ and $\bQ$ which can be done using a {\tt PCA} that is based on Singular Value Decomposition ({\tt SVD}), the complexity is $O(d^2n+d3)$. In fact, the complexity of covariance matrix computation is $O(d^2n)$. Its eigenvalue decomposition is $O(d^3)$ which gives $O(d^2n+d^3)$. Thereby, the complexity per iteration for the {\tt CEM-PCA} algorithm is $O(nkp^2 + d^2n + d^3)$. Let $t$ be the number of iterations necessary for convergence, the complexity of {\tt CEM-PCA} is $O(t(nkp^2 + d^2n + d^3))$. The use of GPUs can significantly speed up the computation time, due to the multiple matrix operations (e.g. covariance matrix computation).

\section{Experiments} \label{sec:experiments}
\subsection{Datasets}
To evaluate the performance of the {\tt CEM-PCA} algorithm, experiments are carried out on different types of labeled data sets described in Table \ref{tab:datasets}.

\paragraph{Synthetic Datasets} {\tt Atom}, {\tt Chainlink}, {\tt Hepta}, {\tt Lsun3D}, and {\tt Tetra}.
\paragraph{Image Datasets} {\tt COIL20}, {\tt USPS}, {\tt Yale}, and {\tt ORL}.
\paragraph{Biomedical Datasets} {\tt Lung}, {\tt Yeast}, and {\tt Breast}
\paragraph{Text Datasets}
{\tt BBC}, {\tt Classic3}, and {\tt Classic4}. 
%
%
We used the cased {\tt GloVe} model \cite{pennington-etal-2014-glove}, with a vocabulary size of 2.2M and 840 billion tokens represented in 300 dimensions. To obtain the embedding of each document we computed the average of the word embeddings that compose each document. The {\tt GloVe} model can be described as a log-bilinear approach that employs a weighted least-squares objective. The fundamental idea behind the model is based on the observation that word-word co-occurrence probabilities ratios can be utilized to capture meaning.
\begin{table}[h!]
\vspace{-0.5cm}
\caption{{Characteristics of synthetic data, image, biomedical and textual datasets }\label{tab:datasets}}
\scriptsize
    \begin{center}
        \begin{tabular}{lccccc}
        \hline
        Set & Data & Type & \#rows & \#columns & \#classes\\
        \hline
        \multirow{5}{*}{\textbf{FCPS}} & {\tt Atom} & Numerical & 800 & 3 & 2 \\
        & {\tt Chainlink} & Numerical & 1000 & 3 & 2 \\
        & {\tt Hepta} & Numerical & 212 & 3 &  7 \\
        & {\tt Lsun3D} & Numerical & 404 & 3 & 4 \\
        & {\tt Tetra} & Numerical & 400 & 3 & 4 \\
        \hline \hline
        \multirow{4}{*}{\textbf{Images}} & {\tt COIL20} & Numerical & 1440 & 1440 & 20 \\
        & {\tt USPS} & Numerical & 9298 & 256 & 10 \\
        & {\tt Yale} & Numerical & 165 & 1024 &  15 \\
        & {\tt ORL} & Numerical & 400 & 1024 & 40 \\
        \hline \hline
        \multirow{3}{*}{\textbf{Biomedical}} & {\tt Lung} & Microarray & 203 & 12600 & 5 \\
        & {\tt Yeast} & Microarray & 1484 & 8 & 10 \\
        & {\tt Breast} & Microarray & 106 & 9 &  6 \\
        \hline \hline
        \multirow{3}{*}{\textbf{Text}} & {\tt BBC} & Text & 2225 & 300 & 5 \\
        & {\tt Classic3} & Text & 3891 & 300 & 3 \\
        & {\tt Classic4} & Text & 7095 & 300 &  4 \\
        \hline
        \end{tabular}
    \end{center}
    \vspace{-0.5cm}
\end{table}
%

\begin{figure*}[!t]
\centering
\includegraphics[scale=0.37]{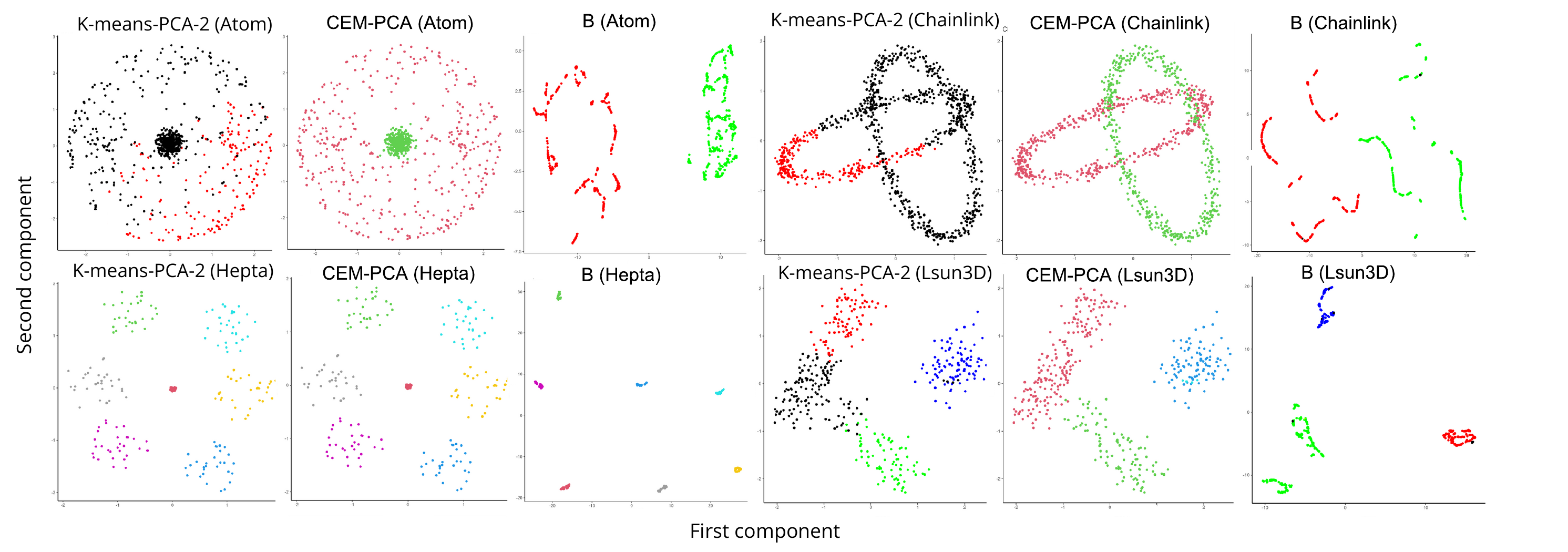} 
\vspace{-0.5cm}
\caption{Comparison between the clustering of {\tt K-means-PCA-2} and {\tt CEM-PCA}, and the representation of the data embedding $\bB$ -- the methods where applied on {\tt FCPS} datasets and the plots where obtained using UMAP (black points represent misclassified objects). {\tt K-means-PCA-2} results from applying {\tt K-means} on the two first components of {\tt PCA}, {\tt CEM-PCA} is our proposed method, and $\bB$ is the data embedding obtained by {\tt CEM-PCA} (see algorithm \ref{alg:model_based_co_clustering:LBCEM}.) This figure highlights the advantage of applying dimension reduction and clustering simultaneously ({\tt CEM-PCA}) rather than sequentially ({\tt K-means-PCA-2}). The embedding space of {\tt CEM-PCA} (represented by $\bB$ in the figure) perfectly captures the separation of the clusters, making it easy for the {\tt CEM} algorithm to perform clustering.}
\label{fig:FCPS}
\end{figure*}
\subsection{Performance Metrics}
To measure the clustering performance of the proposed method, we used the commonly adopted metrics, normalized mutual information ({\tt NMI}) \cite{nmi}, Adjusted Rand Index ({\tt ARI}) \cite{Steinley2004PropertiesOT}, and {\tt Accuracy}. 

The clustering accuracy noted ({\tt Acc}) discovers the one-to-one relationship between two partitions and measures the extent to which each cluster contains data points from the corresponding class. It is defined by
$\frac{1}{n} \sum_{i = 1}^{n}\delta(C_{i}, map(P_{i}))
$
where $n$ is the total number of samples, $P_{i}$ is the $i^{th}$ obtained
cluster and $C_{i}$ is the true $i^{th}$ class provided by the data
set. $\delta(x, y)$ is the delta function that equals one if $x = y$
and equals zero otherwise, and $map(P_{i})$ is the permutation
mapping function that maps the obtained label $P_{i}$ to the
equivalent label from the data set. The best mapping can be
found by using the Kuhn-Munkres algorithm \cite{lovász2009matching}. However, the accuracy measure is not reliable when the classes are dramatically unbalanced. In this case, we prefer other measures such as the {\tt NMI} and {\tt ARI} metrics. The {\tt NMI} is estimated by:
\begin{align*}
\text{NMI} = \frac{\sum_{k,\ell} \frac{n_{k\ell}}{n}\log \frac{nn_{k\ell}}{n_{k}\hat{n}_{\ell}}}{\sqrt{(\sum_{k} n_{k}\log \frac{n_{k}}{n})(\sum_{\ell} \hat{n}_{\ell} \log \frac{\hat{n}_{\ell}}{n})}}
\end{align*}
where $n_{k}$ represents the number of data contained in the class $C_{k}(1 \leq k \leq g)$, $\hat{n}_{\ell}$ is the number of data belonging to the class $P_{\ell}(1 \leq \ell \leq g)$, and $n_{k\ell}$ represents the number of data that are at the intersection between the class $C_{k}$ and the class $P_{\ell}$. The {\tt ARI} metric is a measure of the similarity between two partitions. From a mathematical point of view, the {\tt ARI} is related to the precision. The adjusted form of the Rand Index is as follows:
\begin{align*}
\text{ARI}= \frac{\sum_{k,\ell} \binom{n_{k\ell}}{2} - \left[ \sum_{k} \binom{n_{k}}{2} \sum_{\ell} \binom{\hat{n}_{\ell}}{2} \right] \big/ \binom{n}{2} }{\frac{1}{2} \left[ \sum_{k} \binom{n_{k}}{2} + \sum_{\ell} \binom{\hat{n}_{\ell}}{2} \right] - \left[ \sum_{k} \binom{n_{k}}{2} \sum_{\ell} \binom{\hat{n}_{\ell}}{2} \right] \big/ \binom{n}{2}}
\end{align*}

For all the measures ({\tt Acc}, {\tt NMI}, and {\tt ARI}), a value close to 1 means a good clustering.

\subsection{Parameters Setting}


\subsubsection{Number of iterations and initializations}
The {\tt CEM-PCA} algorithm is initialized randomly. Based on 20 initializations, the best trial that minimizes (\ref{objective_func}) is retained. For convergence, the algorithm requires less than 20 iterations.

\subsubsection{Number of Dimensions $p$}
We trained our model on different numbers of dimensions to consider from the {\tt PCA}. 
In our experiments, $p=10$ appears to be a good choice.




\subsubsection{Regularization parameter $\delta$}
The parameter $\delta$ has a significant impact on the obtained results. As indicated in the objective function~(\ref{objective_func}), the larger $\delta$ is, the more importance is given to classification (i.e. minimizing the intra-class inertia). On the contrary, if we take a smaller $\delta$ we will give more importance to the dimension reduction (i.e. minimize the distance between the input data and the principal components). We performed {\tt CEM-PCA} on different values of $\delta$ ranging from $10^{-6}$ to 1. In our experiments a chosen $\delta$ between $10^{-6}$ and $10^{-5}$ seems to give the best performance overall. In other words, a slight regularization is enough to obtain good results.


\subsection{Comparisons and evaluations}

\subsubsection{Clustering}
We used a Gaussian kernel to build the Laplacian matrix $\bW$ which resulted in perfect results. This enhances even more the smoothness with respect to the intrinsic manifold of the data set in the low embedding space while preserving the local geometrical relationships. The results obtained on synthetic data ({\tt FCPS}) are reported in Table~\ref{tab:clust_perf} and illustrated in Figure~\ref{fig:FCPS}. In terms of Acc, NMI and ARI, we notice that the performance of the proposed method ({\tt CEM-PCA}) is significantly better than the other reference methods, {\tt K-means}, {\tt K-means-PCA-2} which denotes {\tt K-means} applied on the two first components of {\tt PCA} while {\tt K-means-PCA-10} is applied on the 10 first components when $d>10$, {\tt Reduced-Kmeans}, {\tt Deep-\textit{K}-means}, {\tt EM-GMM}, {\tt PCA-GMM}, {\tt Deep-GMM}. 

We first observe that performing the two steps in a sequential manner can lead to unexpected clustering as illustrated by {\tt K-means-PCA-2} and {\tt K-means-PCA-10}. In contrast, {\tt CEM-PCA} that performs both steps simultaneously, creates a mutual reinforcement between dimension reduction and clustering. This performance is confirmed on Images, Biomedical, and Textual datasets (Table~\ref{tab:clust_perf}).
Finally, we notice that {\tt deep-GMM} and {\tt Deep-Kmeans} with a high-cost computing and the {\tt PCA-GMM} method which proceeds by reduction of the dimensionality by cluster appear much less effective than {\tt CEM-PCA} while {\tt Reduced K-means} and {\tt EM-GMM} can be sometimes competitive.

\begin{table*}[!t]
\caption{{Clustering Performance ({\tt NMI}, {\tt ARI}, and Acc -- Accuracy) on the {\tt FCPS}, {\tt Images}, {\tt BioMedical}, and {\tt Text} data sets. The best results are in bold and the second best results are underlined and "-"  corresponds to the case where $d<10$.}}
\label{tab:clust_perf}
\centering
\begin{tabular}{lllllllllll}
\cline{2-11}
 Dataset & Metric & \rot{\tt K-means}  & \rot{\tt Kmeans-PCA-2} & \rot{\tt Kmeans-PCA-10} & \rot{\tt Reduced-Kmeans} & \rot{\tt Deep-k-means} & \rot{\tt EM-GMM} & \rot{\tt PCA-GMM} & \rot{\tt Deep-GMM} & \rot{\tt CEM-PCA} \\
 \hline
 Atom &
 \begin{tabular}{@{}c@{}c@{}}NMI \\ ARI \\ Acc\end{tabular} &
 \begin{tabular}{@{}c@{}c@{}}0.29 \\ 0.18 \\ 0.71\end{tabular} & 
  \begin{tabular}{@{}c@{}c@{}}0.29 \\ 0.18 \\ 0.71\end{tabular} & 
 \begin{tabular}{@{}c@{}c@{}}\textbf{--} \\ \textbf{--} \\ \textbf{--}\end{tabular} & 
 \begin{tabular}{@{}c@{}c@{}}0.56 \\ 0.17 \\ 0.58\end{tabular} & 
 \begin{tabular}{@{}c@{}c@{}}0.33 \\ 0.23 \\ 0.40\end{tabular} & 
 \begin{tabular}{@{}c@{}c@{}}\underline{0.95} \\ \underline{0.98} \\ \underline{0.91}\end{tabular} & 
 \begin{tabular}{@{}c@{}c@{}}0.77 \\ 0.74 \\ 0.85\end{tabular} &
 \begin{tabular}{@{}c@{}c@{}}0.23 \\ 0.15 \\ 0.37\end{tabular} &
 \begin{tabular}{@{}c@{}c@{}}\textbf{1.0} \\ \textbf{1.0} \\ \textbf{1.0}\end{tabular} 
 \\ \hline
 Chainlink & 
 \begin{tabular}{@{}c@{}c@{}}NMI \\ ARI \\ Acc\end{tabular} &
 \begin{tabular}{@{}c@{}c@{}}0.06 \\ 0.10 \\ 0.50\end{tabular} &  
  \begin{tabular}{@{}c@{}c@{}}0.01 \\ 0.01 \\ 0.52\end{tabular} & 
 \begin{tabular}{@{}c@{}c@{}}\textbf{--} \\ \textbf{--} \\ \textbf{--}\end{tabular} & 
 \begin{tabular}{@{}c@{}c@{}}0.06 \\ 0.10 \\ 0.09\end{tabular} &  
 \begin{tabular}{@{}c@{}c@{}}0.07 \\ 0.10 \\ 0.12\end{tabular} &  
 \begin{tabular}{@{}c@{}c@{}}\underline{0.84} \\ \underline{0.91} \\ \underline{0.95} \end{tabular} & 
 \begin{tabular}{@{}c@{}c@{}}0.31 \\ 0.28 \\ 0.41\end{tabular} & 
 \begin{tabular}{@{}c@{}c@{}}0.34 \\ 0.40 \\ 0.55 \end{tabular} & 
 \begin{tabular}{@{}c@{}c@{}}\textbf{0.96} \\ \textbf{0.98} \\ \textbf{0.99}\end{tabular} 
 \\ \hline
 Hepta &
 \begin{tabular}{@{}c@{}c@{}}NMI \\ ARI \\ Acc\end{tabular} &
 \begin{tabular}{@{}c@{}}\textbf{1.0} \\ \textbf{1.0} \\ \textbf{1.0}\end{tabular} &  
   \begin{tabular}{@{}c@{}c@{}}1.0 \\ 1.0 \\ 1.0\end{tabular} & 
 \begin{tabular}{@{}c@{}c@{}}\textbf{--} \\ \textbf{--} \\ \textbf{--}\end{tabular} & 
 \begin{tabular}{@{}c@{}c@{}}\textbf{1.0} \\ \textbf{1.0} \\ \textbf{1.0}\end{tabular} &
 \begin{tabular}{@{}c@{}c@{}}0.90 \\ 0.88 \\ 0.92\end{tabular} &
 \begin{tabular}{@{}c@{}c@{}}\textbf{1.0} \\ \textbf{1.0} \\ \textbf{1.0}\end{tabular} & 
 \begin{tabular}{@{}c@{}c@{}}0.90 \\ 0.78 \\ 0.82\end{tabular} &
 \begin{tabular}{@{}c@{}c@{}}\underline{0.92} \\ \underline{0.90} \\ \underline{0.95}\end{tabular} & 
 \begin{tabular}{@{}c@{}c@{}}\textbf{1.0} \\ \textbf{1.0} \\ \textbf{1.0} \end{tabular}
 \\ \hline
 Lsun3D & 
 \begin{tabular}{@{}c@{}c@{}}NMI \\ ARI \\ Acc\end{tabular} &
 \begin{tabular}{@{}c@{}c@{}}0.73 \\ 0.60 \\ 0.73\end{tabular} &
    \begin{tabular}{@{}c@{}c@{}}0.74 \\ 0.61 \\ 0.74\end{tabular} & 
 \begin{tabular}{@{}c@{}c@{}}\textbf{--} \\ \textbf{--} \\ \textbf{--}\end{tabular} & 
 \begin{tabular}{@{}c@{}c@{}}0.65 \\ 0.59 \\ 0.71\end{tabular} &
 \begin{tabular}{@{}c@{}c@{}}0.84 \\ 0.86 \\ 0.89\end{tabular} &
 \begin{tabular}{@{}c@{}c@{}}\textbf{1.0} \\ \textbf{1.0} \\ \underline{0.97}\end{tabular} & 
 \begin{tabular}{@{}c@{}c@{}}0.74 \\ 0.61 \\ 0.71\end{tabular} &
 \begin{tabular}{@{}c@{}c@{}}0.65 \\ 0.60 \\ 0.73 \end{tabular} & 
 \begin{tabular}{@{}c@{}c@{}}\underline{0.98} \\ \underline{0.99} \\ \textbf{0.98}\end{tabular}
 \\ \hline
 Tetra & 
 \begin{tabular}{@{}c@{}c@{}}NMI \\ ARI \\ Acc\end{tabular} &
 \begin{tabular}{@{}c@{}c@{}}\textbf{1.0} \\ \textbf{1.0} \\ \textbf{1.0}\end{tabular} &
     \begin{tabular}{@{}c@{}c@{}}0.75 \\ 0.73 \\ \underline{0.88}\end{tabular} & 
 \begin{tabular}{@{}c@{}c@{}}\textbf{--} \\ \textbf{--} \\ \textbf{--}\end{tabular} & 
 \begin{tabular}{@{}c@{}c@{}}\textbf{1.0} \\ \textbf{1.0} \\ \textbf{1.0}\end{tabular} &
 \begin{tabular}{@{}c@{}c@{}}0.72 \\ 0.69 \\ 0.79\end{tabular} &
 \begin{tabular}{@{}c@{}c@{}}\textbf{1.0} \\ \textbf{1.0} \\ 1.0\end{tabular} & 
 \begin{tabular}{@{}c@{}c@{}}0.64 \\ 0.54 \\ 0.61\end{tabular} &
 \begin{tabular}{@{}c@{}c@{}}\underline{0.72} \\ \underline{0.73} \\ 0.84\end{tabular} & 
 \begin{tabular}{@{}c@{}c@{}}\textbf{1.0} \\ \textbf{1.0} \\ \textbf{1.0}\end{tabular}
 \\ \hline \hline
 COIL20 & 
 \begin{tabular}{@{}c@{}c@{}}NMI \\ ARI \\ Acc\end{tabular} &
 \begin{tabular}{@{}c@{}c@{}}0.72 \\ 0.55 \\ 0.60\end{tabular} &
     \begin{tabular}{@{}c@{}c@{}}0.63 \\ 0.40 \\ 0.52\end{tabular} & 
 \begin{tabular}{@{}c@{}c@{}}\textbf{0.77} \\ 0.57 \\ 0.65\end{tabular} & 
 \begin{tabular}{@{}c@{}c@{}}0.73 \\ \textbf{0.63} \\ \underline{0.68} \end{tabular} &
 \begin{tabular}{@{}c@{}c@{}}0.49 \\ 0.27 \\ 0.51\end{tabular} &
  \begin{tabular}{@{}c@{}c@{}}0.60 \\ 0.34 \\ 0.61\end{tabular} & 
 \begin{tabular}{@{}c@{}c@{}}0.36 \\ 0.13 \\ 0.23 \end{tabular} &
 \begin{tabular}{@{}c@{}c@{}}0.62 \\ 0.33 \\ 0.67\end{tabular} & 
 \begin{tabular}{@{}c@{}c@{}}\underline{0.75} \\ \underline{0.58} \\ \textbf{0.82} \end{tabular}
 \\ \hline
 USPS & 
 \begin{tabular}{@{}c@{}c@{}}NMI \\ ARI \\ Acc\end{tabular} &
 \begin{tabular}{@{}c@{}c@{}}\underline{0.57} \\ \underline{0.47} \\ \textbf{0.58}\end{tabular} &
     \begin{tabular}{@{}c@{}c@{}}0.37 \\ 0.28 \\ 0.39\end{tabular} & 
 \begin{tabular}{@{}c@{}c@{}}0.54 \\ 0.45 \\ \textbf{0.58}\end{tabular} & 
 \begin{tabular}{@{}c@{}c@{}} 0.54 \\ \underline{0.47} \\ 0.50\end{tabular} &
 \begin{tabular}{@{}c@{}c@{}}0.40 \\ 0.32 \\ \underline{0.51}\end{tabular} &
 \begin{tabular}{@{}c@{}c@{}}0.31 \\ 0.15 \\ 0.47\end{tabular} & 
 \begin{tabular}{@{}c@{}c@{}}0.39 \\ 0.28 \\ 0.43\end{tabular} &
 \begin{tabular}{@{}c@{}c@{}}0.42 \\ 0.29 \\ 0.49\end{tabular} & 
 \begin{tabular}{@{}c@{}c@{}}\textbf{0.62} \\ \textbf{0.50} \\ \textbf{0.58} \end{tabular}
 \\ \hline
 Yale & 
 \begin{tabular}{@{}c@{}c@{}}NMI \\ ARI \\ Acc\end{tabular} &
 \begin{tabular}{@{}c@{}c@{}}0.60 \\ \underline{0.52} \\ 0.55\end{tabular} &
     \begin{tabular}{@{}c@{}c@{}}0.45 \\ 0.18 \\ 0.37\end{tabular} & 
 \begin{tabular}{@{}c@{}c@{}}0.64 \\ 0.41 \\ 0.53\end{tabular} & 
 \begin{tabular}{@{}c@{}c@{}}0.52 \\ 0.26 \\ 0.35\end{tabular} &
 \begin{tabular}{@{}c@{}c@{}}0.48 \\ 0.19 \\ 0.53\end{tabular} &
 \begin{tabular}{@{}c@{}c@{}}\underline{0.65} \\ 0.43 \\ \underline{0.70}\end{tabular} & 
 \begin{tabular}{@{}c@{}c@{}}0.12 \\ 0.01 \\ 0.33\end{tabular} &
 \begin{tabular}{@{}c@{}c@{}}0.42 \\ 0.29 \\ 0.55 \end{tabular} & 
 \begin{tabular}{@{}c@{}c@{}}\textbf{0.66} \\ \textbf{0.53} \\ \textbf{0.71} \end{tabular}
 \\ \hline
 ORL & 
 \begin{tabular}{@{}c@{}c@{}}NMI \\ ARI \\ Acc\end{tabular} &
 \begin{tabular}{@{}c@{}c@{}}0.71 \\ 0.31 \\ 0.57\end{tabular} &
     \begin{tabular}{@{}c@{}c@{}}0.57 \\ 0.16 \\ 0.58\end{tabular} & 
 \begin{tabular}{@{}c@{}c@{}}0.75 \\ 0.40 \\ 0.53\end{tabular} & 
 \begin{tabular}{@{}c@{}c@{}} 0.73 \\ 0.39 \\ 0.75 \end{tabular} &
 \begin{tabular}{@{}c@{}c@{}}0.61 \\ 0.20 
 \\ 0.42 \end{tabular} &
 \begin{tabular}{@{}c@{}c@{}}\textbf{0.82} \\ \textbf{0.54} \\ 
\underline{0.79} \end{tabular} & 
 \begin{tabular}{@{}c@{}c@{}}0.23 \\ 0.21 \\ 0.32 \end{tabular} &
  \begin{tabular}{@{}c@{}c@{}}0.55 \\ 0.13 \\ 0.47 \end{tabular} &
 \begin{tabular}{@{}c@{}c@{}}\underline{0.75} \\ \underline{0.42} \\ \textbf{0.82} \end{tabular}
 \\ \hline \hline
 Lung  &
 \begin{tabular}{@{}c@{}c@{}}NMI \\ ARI \\ Acc\end{tabular} &
 \begin{tabular}{@{}c@{}c@{}}0.57 \\ 0.38 \\ 0.43\end{tabular} &
     \begin{tabular}{@{}c@{}c@{}}0.28 \\ 0.14 \\ 0.49\end{tabular} & 
 \begin{tabular}{@{}c@{}c@{}}0.31 \\ 0.15 \\ 0.50\end{tabular} & 
 \begin{tabular}{@{}c@{}c@{}}\underline{0.63} \\ \underline{0.46} \\ 0.75\end{tabular} &
 \begin{tabular}{@{}c@{}c@{}}0.18 \\ 0.03 \\ \underline{0.54}\end{tabular} &
 \begin{tabular}{@{}c@{}c@{}}0.23 \\ 0.12 \\ 0.22\end{tabular} & 
 \begin{tabular}{@{}c@{}c@{}}0.45 \\ 0.39 \\ 0.51 \end{tabular} & 
 \begin{tabular}{@{}c@{}c@{}}0.43 \\ 0.18 \\ 0.48\end{tabular} & 
 \begin{tabular}{@{}c@{}c@{}}\textbf{0.64} \\ \textbf{0.71} \\ \textbf{0.76} \end{tabular}
 \\ \hline
 Yeast &
 \begin{tabular}{@{}c@{}c@{}}NMI \\ ARI \\ Acc\end{tabular} &
 \begin{tabular}{@{}c@{}c@{}}0.25 \\ 0.12 \\ \underline{0.21}\end{tabular} &
     \begin{tabular}{@{}c@{}c@{}}0.20 \\ 0.09 \\ 0.33\end{tabular} & 
 \begin{tabular}{@{}c@{}c@{}}\textbf{--} \\ \textbf{--} \\ \textbf{--}\end{tabular} & 
 \begin{tabular}{@{}c@{}c@{}}\underline{0.26} \\ \underline{0.13} \\ 0.19 \end{tabular} &
 \begin{tabular}{@{}c@{}c@{}}0.07 \\ 0.02 \\ 
0.10 \end{tabular} &
  \begin{tabular}{@{}c@{}c@{}}0.24 \\ 0.14 \\ \textbf{0.41}\end{tabular} & 
 \begin{tabular}{@{}c@{}c@{}}0.13 \\ 0.08 \\ 0.12\end{tabular} &
 \begin{tabular}{@{}c@{}c@{}}0.11 \\ 0.10 \\ \underline{0.21}\end{tabular} &
 \begin{tabular}{@{}c@{}c@{}}\textbf{0.27} \\ \textbf{0.16} \\ \underline{0.24} \end{tabular}
 \\ \hline
 Breast &
 \begin{tabular}{@{}c@{}c@{}}NMI \\ ARI \\ Acc\end{tabular} &
 \begin{tabular}{@{}c@{}c@{}}0.33 \\ 0.18 \\ 0.33\end{tabular} &
     \begin{tabular}{@{}c@{}c@{}}0.69 \\ 0.80 \\ 0.94\end{tabular} & 
 \begin{tabular}{@{}c@{}c@{}}\textbf{--} \\ \textbf{--} \\ \textbf{--}\end{tabular} & 
 \begin{tabular}{@{}c@{}c@{}}\underline{0.70} \\ \textbf{0.81} \\ \underline{0.95}\end{tabular} &
 \begin{tabular}{@{}c@{}c@{}}0.60 \\ 0.69 \\ 0.72\end{tabular} &
 \begin{tabular}{@{}c@{}c@{}}0.65 \\ 0.72 \\ 0.70\end{tabular} & 
 \begin{tabular}{@{}c@{}c@{}}0.18 \\ 0.13 \\ 0.33\end{tabular} &
 \begin{tabular}{@{}c@{}c@{}}\underline{0.66} \\ \underline{0.78} 
\\ 0.81 \end{tabular} &
 \begin{tabular}{@{}c@{}c@{}}\textbf{0.71} \\ \textbf{0.81} \\ \textbf{0.96} \end{tabular}
 \\ \hline \hline
 BBC &
 \begin{tabular}{@{}c@{}c@{}}NMI \\ ARI \\ Acc\end{tabular} &
 \begin{tabular}{@{}c@{}c@{}}\underline{0.81} \\ \underline{0.82} \\ \underline{0.92} \end{tabular} &
     \begin{tabular}{@{}c@{}c@{}}0.63 \\ 0.61 \\ 0.81\end{tabular} & 
 \begin{tabular}{@{}c@{}c@{}} \underline{0.81} \\ \textbf{0.83} \\ \underline{0.92}\end{tabular} & 
 \begin{tabular}{@{}c@{}c@{}} 0.78 \\ 0.77 \\ 0.80\end{tabular} &
 \begin{tabular}{@{}c@{}c@{}}0.52 \\ 0.44 \\ 0.51\end{tabular} &
 \begin{tabular}{@{}c@{}c@{}}0.52 \\ 0.44 \\ 0.58 \end{tabular} & 
 \begin{tabular}{@{}c@{}c@{}}0.60 \\ 0.59 \\ 0.62\end{tabular} &
 \begin{tabular}{@{}c@{}c@{}} 0.78 \\ 0.78 
\\ 0.84 \end{tabular} & 
 \begin{tabular}{@{}c@{}c@{}}\textbf{0.83} \\ 0.81 \\ \textbf{0.94} \end{tabular}
 \\ \hline
 Classic3 &
 \begin{tabular}{@{}c@{}c@{}}NMI \\ ARI \\ Acc\end{tabular} &
 \begin{tabular}{@{}c@{}c@{}}0.90 \\ 0.93 \\ 0.90\end{tabular} &
     \begin{tabular}{@{}c@{}c@{}}0.89 \\ 0.92 \\ \underline{0.97}\end{tabular} & 
 \begin{tabular}{@{}c@{}c@{}}0.89 \\ 0.92 \\ \underline{0.97}\end{tabular} & 
 \begin{tabular}{@{}c@{}c@{}}0.90 \\ 0.93 \\ \underline{0.97} \end{tabular} &
 \begin{tabular}{@{}c@{}c@{}}0.89 \\ 0.92 \\ 0.88\end{tabular} &
 \begin{tabular}{@{}c@{}c@{}}\underline{0.91} \\ \underline{0.94} \\ 0.93\end{tabular} & 
 \begin{tabular}{@{}c@{}c@{}}0.54 \\ 0.50 \\ 0.48\end{tabular} &
 \begin{tabular}{@{}c@{}c@{}}0.84 \\ 0.88 \\ 0.90\end{tabular} & 
 \begin{tabular}{@{}c@{}c@{}}\textbf{0.96} \\ \textbf{0.98} \\ \textbf{0.98} \end{tabular}
 \\ \hline
 Classic4 &
 \begin{tabular}{@{}c@{}c@{}}NMI \\ ARI \\ Acc\end{tabular} &
 \begin{tabular}{@{}c@{}c@{}}\underline{0.66} \\ \underline{0.46} \\ 0.70\end{tabular} &
     \begin{tabular}{@{}c@{}c@{}}0.53 \\ 0.33 \\ 0.52\end{tabular} & 
 \begin{tabular}{@{}c@{}c@{}}0.66 \\ 0.46 \\ 0.70\end{tabular} & 
 \begin{tabular}{@{}c@{}c@{}}\underline{0.66} \\ \underline{0.46} \\ 0.70\end{tabular} &
 \begin{tabular}{@{}c@{}c@{}}0.64 \\ 0.44 \\ \underline{0.71}\end{tabular} &
 \begin{tabular}{@{}c@{}c@{}}0.60 \\ 0.41 \\ 0.69\end{tabular} & 
 \begin{tabular}{@{}c@{}c@{}}0.51 \\ 0.43 \\ 0.69\end{tabular} &
 \begin{tabular}{@{}c@{}c@{}}0.47 \\ 0.29 \\ 0.62 \end{tabular} & 
 \begin{tabular}{@{}c@{}c@{}}\textbf{0.71} \\ \textbf{0.54} \\ \textbf{0.74} \end{tabular}
 \\ \hline
\end{tabular}
\vspace{-0.1cm}
\end{table*}
The results obtained with different values of {\tt NMI} are summarized by Critical Difference (CD) diagrams in Figure \ref{fig:nemenyi_images}.  The aim of CD diagrams \cite{demvsar2006statistical} is to visualize the performance ranks of each approach over the different datasets. If we take the example of $\rho= 10\%$, the CD diagram summarizes the scores given in Table \ref{tab:clust_perf}. It depicts the average rank of each method and the bold line corresponds to the critical difference, based on the post-hoc Nemenyi test \cite{nemenyi1963distribution}. We observe the interest of {\tt CEM-PCA} which clearly outeperforms  other methods including deep versions relying on {\tt GMM} or not.
\begin{figure}[!h]
\centering
\includegraphics[scale=0.22]{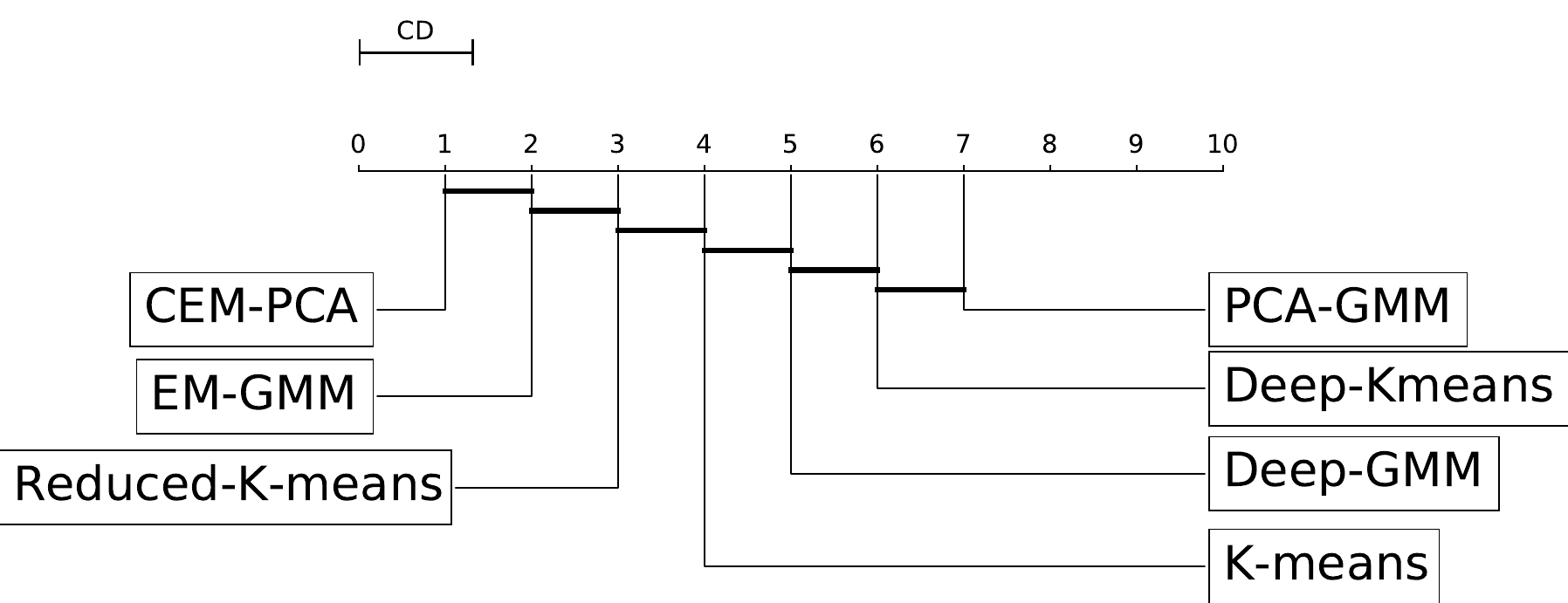} 
\caption{Comparison of all models against each other with the Nemenyi test. Groups of models that are not
significantly different (at $\rho = 0.10$) are connected (based on NMI).}
\label{fig:nemenyi_images}
\end{figure}
\subsubsection{Data embedding}
To go further and measure the quality of data embedding expressed by $\bB$, we carried out projections using the nonlinear {\tt UMAP} method \cite{mcinnes2018umap}. We observe that the separability is sufficiently expressed by this matrix which benefits from the regularization of the third clustering term (\ref{objective_func}) as illustrated in Figures \ref{fig:FCPS}, \ref{fig:umap_images}, and \ref{fig:umap_biomed_text}.
\begin{figure*}[!h]
\includegraphics[scale=0.36]{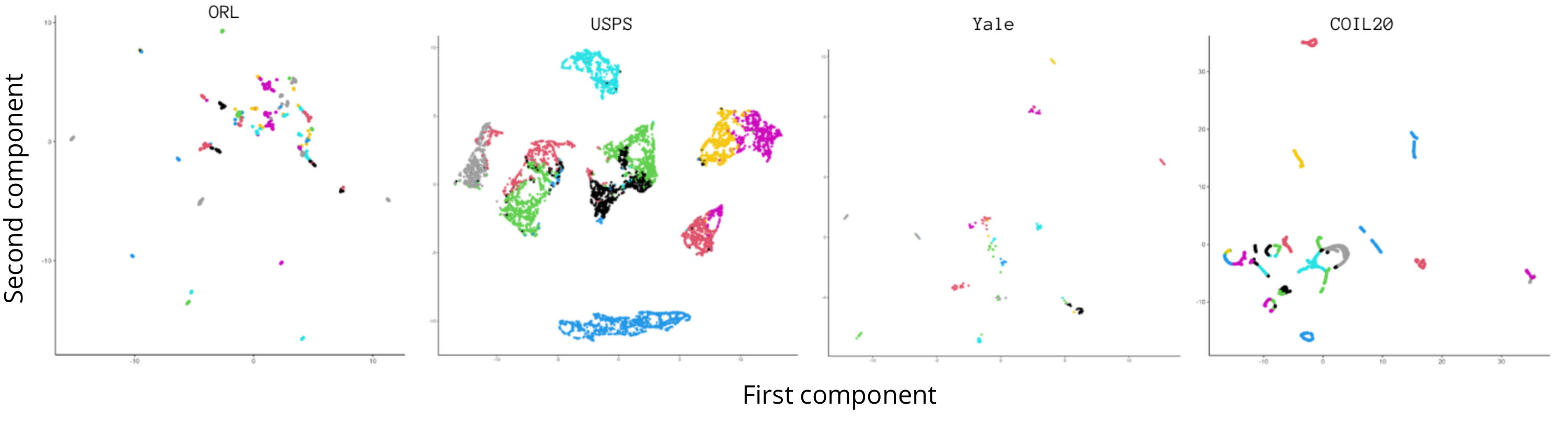} 
\caption{{\tt UMAP} projection and clustering of the {\tt CEM-PCA} method on Images data.}
\label{fig:umap_images}
\end{figure*}
\begin{figure*}[!h]
\includegraphics[scale=0.36]{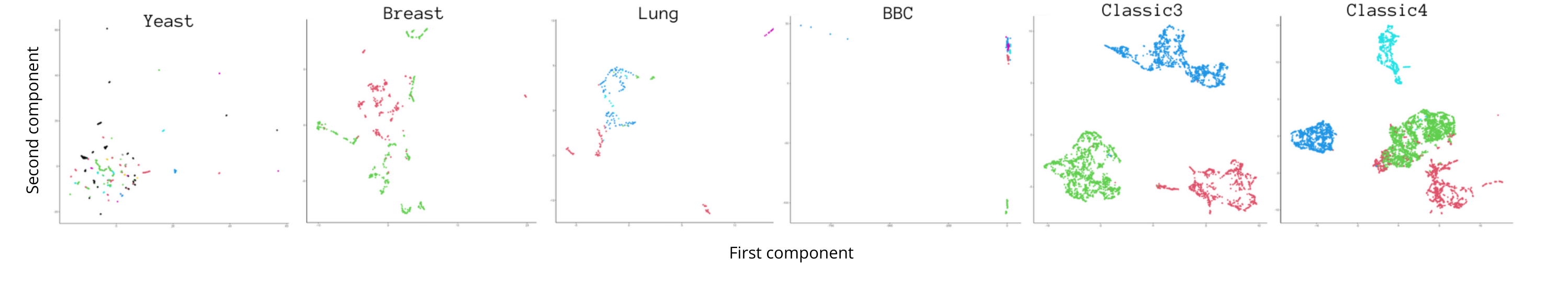}
\caption{{\tt UMAP} projection and clustering of the {\tt CEM-PCA} method on {\tt BioMedical} data (Yeast, Breast, and Lung) and text data (BBC, Classic3, and Classic4).}
\label{fig:umap_biomed_text}
\end{figure*}
%

%




\section{Relationship Between {\tt CEM-PCA} and State-of-the-Art Methods} \label{sec:relation_soa}

Next, we show how our proposed {\tt CEM-PCA} approach is related to some other clustering and data embedding methods.
\subsection{Relation with {\tt EM-GMM} and {\tt K-means}}
The third term of (\ref{objective_func}) corresponds exactly to the minus complete-data log-likelihood from $\bM$:
\begin{align}
- \sum_{i=1}^n\sum_{k=1}^g z_{ik}\log(\pi_k \varphi(\bm_{i},(\bs_{k},\bSigma_{k})).
\end{align}
Its minimization is equivalent to the maximization of the complete-data log-likelihood performed by the {\tt CEM} algorithm. Note that Parsimonious parameterizations \cite{banfield1993model,celeux1995gaussian} of the covariances matrices can be obtained by means of an eigen-decomposition of the form $\bSigma_k=\lambda_k D_k A_k D_k^\top$, where $\lambda_k$ is a scalar controlling the volume of the ellipsoid, $A_k$ is a diagonal matrix specifying the shape of the density contours with $det(A_k) =1$, and $D_k$ is an orthogonal matrix which determines the orientation of the corresponding ellipsoid. Thereby, when $\bSigma_k=\lambda \bI$ and $\pi_k=\pi$ $\forall k$, the {\tt CEM} algorithm is {\tt K-means}. In other words {\tt K-means} is a constrained {\tt CEM} algorithm applied on a reduced data matrix $\bM$ and the k-means objective function takes the following form,
\vspace{-0.5em}
\begin{equation}\label{kmobjective}
||\bM-\bZ\bS||^2.
\end{equation}
\vspace{-2em}
\subsection{\tt Smoothed PCA}
The new data representation is referred to as 
$\tilde{\bX}=\bW\bX$ of size $(n \times d)$ can be viewed as a multiplicative way to encode information from both $\bW$ and $\bX$. 
Then the objective of {\tt CEM-PCA}, defines a graph regularized {\tt PCA} on the smoothed matrix $\tilde{\bX}$. Thus, the first term in (\ref{objective_func}) performs {\tt PCA}, on the centroid computed on the neighborhood (barycenter) of each node. In fact, from 
\vspace{-1em}
$$
 \min_{\mathbf{B},\mathbf{Q}} \,  {\left \| \tilde{\bX}-\mathbf{B}\mathbf{Q}^{\top} \right \|}^2
$$
plugging the optimal solution $\mathbf{Q}=\tilde{\bX}^\top\mathbf{B}$ leads to the equivalent trace maximization problem of a {\tt smoothed PCA} 
\begin{equation}\label{M}
 \max_{\mathbf{B}} Tr (\mathbf{B}^\top (\tilde{\bX}\tilde{\bX}^\top) {\mathbf{B}}).
\end{equation}

\subsection{{\tt Smoothed PCA}  with  clustering regularization}
The objective function of {\tt CEM-PCA} takes the following form
\begin{align}\label{regsmpca}
 \max_{\mathbf{B}} Tr (\mathbf{B}^\top (\tilde{\bX}\tilde{\bX}^\top) {\mathbf{B}} &+  \delta ||\bB-\bM||^2 \nonumber \\
 &- \sum_{i=1}^n\sum_{k=1}^g z_{ik}\log(\pi_k \varphi(\bm_{i},(\bs_{k},\bSigma_{k})) .
\end{align}
This shows that the first term is related to a {\tt Smoothed PCA} and the third term represents a  clustering regularization that enriches the {\tt Smoothed PCA} by plugging the hidden clustering structure via $\mathbf{Z}$. 
%

\subsection{Reduced k-means}
In this subsection we will focus our attention on the objective of Reduced k-means given in \cite{clusPCA,Yamamoto_12}, which can be expressed as follows  
\begin{equation}\label{eq:reducedkm}
 {\left \| \mathbf{X}-\mathbf{Z}\mathbf{S}\mathbf{Q}^\top \right \|}^{2}  = {\left \|\mathbf{X}-\mathbf{X}\mathbf{Q} \mathbf{Q}^\top \right \|}^{2} + {\left \|\mathbf{XQ}-\mathbf{Z}\mathbf{S} \right \|}^{2}. 
\end{equation}
As with {\tt CEM-PCA}, the first term of \ref{eq:reducedkm} performs {\tt PCA} and the second term is the {\tt K-means} objective applied on the reduced {\tt PCA} embedding, which is a special case of the third term in {\tt CEM-PCA} objective. Then, {\tt Reduced K-means} objective is equivalent to {\tt CEM-PCA} objective with the following additional constraints $\pi_k=\pi, \forall k$ and $\bSigma_{k}=\bI, \forall k$.

\subsection{Regularized spectral data embedding}
By setting $\tilde{\bX}=\bW$  we rely only on the initial graph information without integrating any additional information.
From the first term of the objective function reported in (\ref{objective_func})
$
 \min_{\mathbf{B},\mathbf{Q}} \,  {\left \| \bW-\mathbf{B}\mathbf{Q}^{\top} \right \|}^2
$
plugging the optimal solution $\mathbf{Q}=\bW^\top\mathbf{B}$ leads to the following equivalent trace maximization problem of a {\tt Spectral embedding} 
\begin{equation}\label{M}
 \max_{\mathbf{B}} Tr (\mathbf{B}^\top (\bW\bW^\top) {\mathbf{B}}).
\end{equation}
Minimizing the first term in (\ref{objective_func}) is equivalent to the trace maximization problem expressed by
$\max_{\mathbf{B}} \,  Tr(\mathbf{B}^{\top}\mathbf{W}\mathbf{
W}^{\top}\mathbf{B}). $
It is easy to show that $\mathbf{W}$ and
$\mathbf{W}\mathbf{W}^{\top}$ have the same eigenvectors as in classical spectral clustering. Let $\mathbf{B}\Lambda \mathbf{B}^{\top}$ be the eigendecomposition of $\mathbf{W}$. The eigendecomposition of $\mathbf{W}\mathbf{W}^\top=\mathbf{B}\Lambda^2 \mathbf{B}^{\top}$.
Then, the objective of {\tt CEM-PCA} in (\ref{objective_func}), is equivalent  to regularized spectral data embedding by the complete-data log-likelihood,
\begin{align*}\label{regsmpca}
  &\max_{\mathbf{B},\bZ,\bM,\bS,\pi_k,\bSigma_{k}} Tr \big(\mathbf{B}^\top (\bW \bW^\top){\mathbf{B}}\big) \\
  &+ \sum_{i=1}^n\sum_{k=1}^g z_{ik}\log(\pi_k \varphi(\bm_{i},(\bs_{k},\bSigma_{k}))\quad\mbox{s.t}\quad \bM=\bB.
\end{align*}
Assuming that {\tt K-means} as a special case of {\tt CEM}, the objective function of {\tt CEM-PCA} is reduced to 
\begin{equation*}
  \max_{\mathbf{B},\bZ,\bM,\bS} Tr \big(\mathbf{B}^\top (\bW \bW^\top){\mathbf{B}}\big) - ||\bM-\bZ\bS||^2\quad\mbox{s.t}\quad \bM=\bB.
\end{equation*}

\subsection{Graph Convolutional Networks}
Graph Convolutional Networks (GCNs) have experienced significant attention and have become popular methods for learning graph representations. Below, we establish the connection between the graph convolution operator of GCN and the closed-form embedding solution of the {\tt CEM-PCA} formulation. We demonstrate that its provided embedding is {\tt SVD} of the GCN embedding and then can achieve  better or similar results to GCN over several benchmark datasets. 

Similar to other neural networks stacked with repeated layers, GCN contains multiple graph convolution layers; each of which is followed by a nonlinear activation \cite{kipf2016semi,wu2019simplifying,chen2020simple}.
Let $\bH^{(\ell)}$ be the $\ell$-th layer hidden representation, then GCN follows:
\begin{equation}\label{gcn1}
\bH^{(\ell+1)} = \sigma(\bW \bH^{(\ell)} \bQ^{(\ell)}) 
\end{equation}
where $\bQ^{(\ell)}$ is the $\ell$-th layer parameter (to be learned), $\bH^{(0)}=\bX$ and $\sigma$ is the nonlinear activation function. Graph convolution operation is defined as the formulation before activation in (\ref{gcn1}).
 The graph convolution (parameterized with $\bQ$) mapping the feature matrix $\bX$ to a new representation $\bY$  defined as $\bY =\bW\bX\bQ$. 
The embedding solution of {\tt CEM-PCA} is given by the closed form solution of the following problem 
\begin{equation}
\max_{\mathbf{B}} \; Tr\Big((\tilde{\mathbf{X}}\mathbf{Q}+\delta \bM) \mathbf{B}^{\top}\Big) \quad \mbox{s.t.} \quad \mathbf{B}^{\top} \mathbf{B} = \mathbf{I}.
\end{equation}
Let $\hat{\bU}\hat{\Sigma}\hat{\bV}^\top$ be {\tt SVD} of $((\tilde{\mathbf{X}}\mathbf{Q}+\delta \bM) \mathbf{B}^{\top})=(\bW\bX\bQ+\delta \bM)=(\bY+\delta \bM)$, then the embedding $\bB=\hat{\bU}\hat{\bV}^\top$ is obtained from {\tt SVD} of $\bY$.

\section{Conclusion}
In unsupervised learning, {\tt PCA} and {\tt K-means} are popular for their simplicity and effectiveness in dimensionality reduction, partitioning, and visualization, though their sequential use suffers from differing objectives. We propose a method unifying dimension reduction and clustering via Gaussian mixtures \cite{banfield1993model}, overcoming challenges of dimensionality and slow EM convergence through joint optimization of embedding ({\tt PCA}) and clustering ({\tt CEM}, an extension of {\tt K-means}). This scalable approach reveals more meaningful structures than sequential or deep methods, opening perspectives in representation learning and clustering. Beyond traditional applications, our method could also offers promise for interpretability in large language models by uncovering meaningful clusters in embeddings and intermediate transformer layers \cite{tighidet-etal-2024-probing,tighidet-etal-2025-context}.

\printbibliography

@article{banfield1993model,
 author = {Jeffrey D. Banfield and Adrian E. Raftery},
 journal = {Biometrics},
 number = {3},
 pages = {803--821},
 publisher = {[Wiley, International Biometric Society]},
 title = {Model-Based Gaussian and Non-Gaussian Clustering},
 volume = {49},
 year = {1993}
}

@article{chang1983using,
  title={On using principal components before separating a mixture of two multivariate normal distributions},
  author={Chang, Wei-Chien},
  journal={Journal of the Royal Statistical Society: Series C (Applied Statistics)},
  volume={32},
  number={3},
  pages={267--275},
  year={1983},
  publisher={Wiley Online Library}
}

@article{demvsar2006statistical,
  title={Statistical comparisons of classifiers over multiple data sets},
  author={Dem{\v{s}}ar, Janez},
  journal={The Journal of Machine learning research},
  volume={7},
  pages={1--30},
  year={2006},
  publisher={JMLR. org}
}

@book{nemenyi1963distribution,
  title={Distribution-free multiple comparisons.},
  author={Nemenyi, Peter Bjorn},
  year={1963},
  publisher={Princeton University}
}

@inproceedings{Engel_12,
  title={A survey of dimension reduction methods for high-dimensional data analysis and visualization},
  author={Engel, D. and H{\"u}ttenberger, L. and Hamann, B.},
  booktitle={OAIS – Open Access Series in Informatics},
  volume={27},
  pages =	{135--149},
  year={2012},
  organization={Schloss Dagstuhl, Leibniz-Zentrum fuer Informatik}
}

@book{Gittins_12,
  title={Canonical Analysis – A Review with Applications in Ecology (Biomathematics, vol. 12), Springer-Verlag, Berlin – Heidelberg – New York – Tokyo 1985, 351 S., 16 Abb., DM 128},
  author={Gittins, R.},
  year={1985},
  publisher={Springer}
}

@article{Van_09,
  title={Dimensionality reduction: a comparative},
  author={Van Der Maaten, L. and Postma, E. and Van den Herik, J.},
  journal={J Mach Learn Res},
  volume={10},
  pages={66--71},
  year={2009}
}

@article{Yamamoto_12,
  title={Clustering of functional data in a low-dimensional subspace},
  author={Yamamoto, M.},
  journal={Advances in Data Analysis and Classification},
  volume={6},
  number={3},
  pages={219--247},
  year={2012},
  publisher={Springer}
}

@article{Yamamoto_14,
  title={A general formulation of cluster analysis with dimension reduction and subspace separation},
  author={Yamamoto, M. and Hwang, H.},
  journal={Behaviormetrika},
  volume={41},
  number={1},
  pages={115--129},
  year={2014},
  publisher={The Behaviormetric Society of Japan}
}

@article{celeux1995gaussian,
title = {Gaussian parsimonious clustering models},
journal = {Pattern Recognition},
volume = {28},
number = {5},
pages = {781-793},
year = {1995},
issn = {0031-3203},
author = {Gilles Celeux and Gérard Govaert},
keywords = {Gaussian mixture, Eigenvalue decomposition, Cluster volumes},
abstract = {Gaussian clustering models are useful both for understanding and suggesting powerful criteria. Banfield and Raftery, Biometriks 49, 803–821 (1993), have considered a parameterization of the variance matrix Σk of a cluster Pk in terms of its eigenvalue decomposition, Σk = λkDkAkDk′ where λk defines the volume of Pk, Dk is an orthogonal matrix which defines its orientation and Ak is a diagonal matrix with determinant 1 which defines its shape. This parametrization allows us to propose many general clustering criteria from the simplest one (spherical clusters with equal volumes which leads to the classical k-means criterion) to the most complex one (unknown and different volumes, orientations and shapes for all clusters). Methods of optimization to derive the maximum likelihood estimates as well as the practical usefulness of these models are discussed. We especially analyse the influence of the volumes of clusters. We report Monte Carlo simulations and an application on stellar data which dramatically illustrated the relevance of allowing clusters to have different volumes.}
}

@article{kipf2016semi,
  title={Semi-supervised classification with graph convolutional networks},
  author={Kipf, Thomas N and Welling, Max},
  journal={arXiv preprint arXiv:1609.02907},
  year={2016}
}

@inproceedings{wu2019simplifying,
  title={Simplifying graph convolutional networks},
  author={Wu, Felix and Souza, Amauri and Zhang, Tianyi and Fifty, Christopher and Yu, Tao and Weinberger, Kilian},
  booktitle={International conference on machine learning},
  pages={6861--6871},
  year={2019}
}

@inproceedings{chen2020simple,
  title={Simple and deep graph convolutional networks},
  author={Chen, Ming and Wei, Zhewei and Huang, Zengfeng and Ding, Bolin and Li, Yaliang},
  booktitle={International Conference on Machine Learning},
  pages={1725--1735},
  year={2020},
  organization={PMLR}
}

@article{mcinnes2018umap,
  title={Umap: Uniform manifold approximation and projection for dimension reduction},
  author={McInnes, Leland and Healy, John and Melville, James},
  journal={arXiv preprint arXiv:1802.03426},
  year={2018}
}

@inproceedings{macqueen1967classification,
  title={Classification and analysis of multivariate observations},
  author={MacQueen, J},
  booktitle={5th Berkeley Symp. Math. Statist. Probability},
  pages={281--297},
  year={1967},
  organization={University of California Los Angeles LA USA}
}

@article{dempster1977maximum,
  title = {Maximum likelihood from incomplete data via the EM algorithm},
  author = {Dempster, Arthur P and Laird, Nan M and Rubin, Donald B},
  journal = {Journal of the royal statistical society. Series B (methodological)},
  pages = {1--38},
  year = 1977,
  publisher = {JSTOR}
}

@ARTICLE{celux1992,
title = {A classification EM algorithm for clustering and two stochastic versions},
author = {Celeux, Gilles and Govaert, Gerard},
year = {1992},
journal = {Computational Statistics\& Data Analysis},
volume = {14},
number = {3},
pages = {315-332}
}

@articleInfo{pca-gmm,
title = {PCA reduced Gaussian mixture models with applications in superresolution},
journal = {Inverse Problems and Imaging},
volume = {16},number = {2},pages = {341-366},
year = {2022},
issn = {1930-8337},
author = {Johannes Hertrich and Dang-Phuong-Lan Nguyen and Jean-Francois Aujol and Dominique Bernard and Yannick Berthoumieu and Abdellatif Saadaldin and Gabriele Steidl},
keywords = {Gaussian mixture models, expectation maximization algorithm, dimensionality reduction, principal component analysis, maximum likelihood estimation, superresolution}
}

@article{nmi,
author = {Strehl, Alexander and Ghosh, Joydeep},
year = {2002},
month = {01},
pages = {583-617},
title = {Cluster Ensembles - A Knowledge Reuse Framework for Combining Multiple Partitions},
volume = {3},
journal = {Journal of Machine Learning Research}
}

@article{Steinley2004PropertiesOT,
  title={Properties of the Hubert-Arabie adjusted Rand index.},
  author={Douglas L. Steinley},
  journal={Psychological methods},
  year={2004},
  volume={9 3},
  pages={
          386-96
        }
}

@article{clusPCA,
  title={Joint dimension reduction and clustering in {R} (Part {I})},
  author={Markos, Angelos}
}

@article{fard2020deep,
  title={Deep k-means: Jointly clustering with k-means and learning representations},
  author={Fard, Maziar Moradi and Thonet, Thibaut and Gaussier, Eric},
  journal={Pattern Recognition Letters},
  volume={138},
  pages={185--192},
  year={2020},
  publisher={Elsevier}
}

@article{pca,
title = {Principal components analysis (PCA)},
journal = {Computers\& Geosciences},
volume = {19},
number = {3},
pages = {303-342},
year = {1993},
issn = {0098-3004},
author = {Andrzej Maćkiewicz and Waldemar Ratajczak},
keywords = {Principal Components Analysis, Variance-covariance matrix, Coefficients of determination, Eigenvalues, Eigenvectors, Correlation matrix, Bartlett's statistics, FORTRAN 77},
abstract = {Principal Components Analysis (PCA) as a method of multivariate statistics was created before the Second World War. However, the wider application of this method only occurred in the 1960s, during the “Quantitative Revolution” in the Natural and Social Sciences. The main reason for this time-lag was the huge difficulty posed by calculations involving this method. Only with the advent and development of computers did the almost unlimited application of multivariate statistical methods, including principal components, become possible. At the same time, requirements arose for precise numerical methods concerning, among other things, the calculation of eigenvalues and eigenvectors, because the application of principal components to technical problems required absolute accuracy. On the other hand, numerous applications in Social Sciences gave rise to a significant increase in the ability to interpret these nonobservable variables, which is just what the principal components are. In the application of principal components, the problem is not only to do with their formal properties but above all, their empirical origins. The authors considered these two tendencies during the creation of the program for principal components. This program—entitled PCA—accompanies this paper. It analyzes consecutively, matrices of variance-covariance and correlations, and performs the following functions: •- the determination of eigenvalues and eigenvectors of these matrices.•- the testing of principal components.•- the calculation of coefficients of determination between selected components and the initial variables, and the testing of these coefficients,•- the determination of the share of variation of all the initial variables in the variation of particular components,•- construction of a dendrite for the initial set of variables,•- the construction of a dendrite for a selected pattern of the principal components,•- the scatter of the objects studied in a selected coordinate system. Thus, the PCA program performs many more functions especially in testing and graphics, than PCA programs in conventional statistical packages. Included in this paper are a theoretical description of principal components, the basic rules for their interpretation and also statistical testing.}
}

@inproceedings{neurips_laplacian_emb,
 author = {Belkin, Mikhail and Niyogi, Partha},
 booktitle = {Advances in Neural Information Processing Systems},
 editor = {T. Dietterich and S. Becker and Z. Ghahramani},
 pages = {},
 publisher = {MIT Press},
 title = {Laplacian Eigenmaps and Spectral Techniques for Embedding and Clustering},
 volume = {14},
 year = {2001}
}

@article{siam_laplacian_emb,
author = {Zhang, Zhenyue and Zha, Hongyuan},
title = {Principal Manifolds and Nonlinear Dimensionality Reduction via Tangent Space Alignment},
journal = {SIAM Journal on Scientific Computing},
volume = {26},
number = {1},
pages = {313-338},
year = {2004},
abstract = { We present a new algorithm for manifold learning and nonlinear dimensionality reduction. Based on a set of unorganized data points sampled with noise from a parameterized manifold, the local geometry of the manifold is learned by constructing an approximation for the tangent space at each data point, and those tangent spaces are then aligned to give the global coordinates of the data points with respect to the underlying manifold. We also present an error analysis of our algorithm showing that reconstruction errors can be quite small in some cases. We illustrate our algorithm using curves and surfaces both in two-dimensional/three-dimensional (2D/3D) Euclidean spaces and in higher-dimensional Euclidean spaces. We also address several theoretical and algorithmic issues for further research and improvements. }
}

@book{mclachlan2007algorithm,
  title={The EM algorithm and extensions},
  author={McLachlan, Geoffrey J and Krishnan, Thriyambakam},
  year={2007},
  publisher={John Wiley \& Sons}
}

@article{mclachlan2019finite,
  title={Finite mixture models},
  author={McLachlan, Geoffrey J and Lee, Sharon X and Rathnayake, Suren I},
  journal={Annual review of statistics and its application},
  volume={6},
  pages={355--378},
  year={2019},
  publisher={Annual Reviews}
}

@inproceedings{pennington-etal-2014-glove,
    title = "{G}lo{V}e: Global Vectors for Word Representation",
    author = "Pennington, Jeffrey  and
      Socher, Richard  and
      Manning, Christopher",
    booktitle = "Proceedings of the 2014 Conference on Empirical Methods in Natural Language Processing ({EMNLP})",
    year = "2014",
    publisher = "Association for Computational Linguistics",
    pages = "1532--1543",
}

@article{DBLP:journals/corr/PintoE15,
  author       = {Rafael C. Pinto and
                  Paulo Martins Engel},
  title        = {A Fast Incremental Gaussian Mixture Model},
  journal      = {CoRR},
  volume       = {abs/1506.04422},
  year         = {2015}
}

@inproceedings{Reynolds2009GaussianMM,
  title={Gaussian Mixture Models},
  author={Douglas A. Reynolds},
  booktitle={Encyclopedia of Biometrics},
  year={2009}
}

@misc{viroli2017deep,
      title={Deep Gaussian Mixture Models}, 
      author={Cinzia Viroli and Geoffrey J. McLachlan},
      year={2017}
}

@article{MDS,
  title={Combining multi-dimensional scaling and cluster analysis to describe the diversity of rural households},
  author={Pacini, GC and Colucci, D and Baudron, Flora and Righi, E and Corbeels, Marc and Tittonell, Pablo and Stefanini, FM},
  journal={Experimental Agriculture},
  volume={50},
  number={3},
  pages={376--397},
  year={2014},
  publisher={Cambridge University Press}
}

@article{ISOMAP,
  title={The isomap algorithm and topological stability},
  author={Balasubramanian, Mukund and Schwartz, Eric L},
  journal={Science},
  volume={295},
  number={5552},
  pages={7--7},
  year={2002},
  publisher={American Association for the Advancement of Science}
}

@article{LLE,
  title={Nonlinear dimensionality reduction by locally linear embedding},
  author={Roweis, Sam T and Saul, Lawrence K},
  journal={science},
  volume={290},
  number={5500},
  pages={2323--2326},
  year={2000},
  publisher={American Association for the Advancement of Science}
}

@article{LPP,
  title={Locality preserving projections},
  author={He, Xiaofei and Niyogi, Partha},
  journal={Advances in neural information processing systems},
  volume={16},
  year={2003}
}

@article{SNE,
  title={Stochastic neighbor embedding},
  author={Hinton, Geoffrey E and Roweis, Sam},
  journal={Advances in neural information processing systems},
  volume={15},
  year={2002}
}

@article{hinton2006reducing,
  title={Reducing the dimensionality of data with neural networks},
  author={Hinton, Geoffrey E and Salakhutdinov, Ruslan R},
  journal={science},
  volume={313},
  number={5786},
  pages={504--507},
  year={2006},
  publisher={American Association for the Advancement of Science}
}

@book{lovász2009matching,
  title={Matching Theory},
  author={Lov{\'a}sz, L. and Plummer, M.D.},
  year={2009},
  publisher={AMS Chelsea Pub.}
}

@inproceedings{tighidet-etal-2025-context,
    title = "Context Copying Modulation: The Role of Entropy Neurons in Managing Parametric and Contextual Knowledge Conflicts",
    author = "Tighidet, Zineddine  and
      Mogini, Andrea  and
      Ben younes, Hedi  and
      Mei, Jiali  and
      Gallinari, Patrick  and
      Piwowarski, Benjamin",
    editor = "Christodoulopoulos, Christos  and
      Chakraborty, Tanmoy  and
      Rose, Carolyn  and
      Peng, Violet",
    booktitle = "Findings of the Association for Computational Linguistics: EMNLP 2025",
    month = nov,
    year = "2025",
    address = "Suzhou, China",
    publisher = "Association for Computational Linguistics",
    url = "https://aclanthology.org/2025.findings-emnlp.1116/",
    doi = "10.18653/v1/2025.findings-emnlp.1116",
    pages = "20469--20481",
    ISBN = "979-8-89176-335-7",
    abstract = "The behavior of Large Language Models (LLMs) when facing contextual information that conflicts with their internal parametric knowledge is inconsistent, with no generally accepted explanation for the expected outcome distribution. Recent work has identified in autoregressive transformer models a class of neurons {--} called $\textit{entropy neurons}$ {--} that produce a significant effect on the model output entropy while having an overall moderate impact on the ranking of the predicted tokens. In this paper, we investigate the preliminary claim that these neurons are involved in inhibiting context copying behavior in transformers by looking at their role in resolving conflicts between contextual and parametric information. We show that $\textit{entropy neurons}$ are responsible for suppressing context copying across a range of LLMs, and that ablating them leads to a significant change in the generation process. These results enhance our understanding of the internal dynamics of LLMs when handling conflicting information."
}

@inproceedings{tighidet-etal-2024-probing,
    title = "Probing Language Models on Their Knowledge Source",
    author = "Tighidet, Zineddine  and
      Mei, Jiali  and
      Piwowarski, Benjamin  and
      Gallinari, Patrick",
    editor = "Belinkov, Yonatan  and
      Kim, Najoung  and
      Jumelet, Jaap  and
      Mohebbi, Hosein  and
      Mueller, Aaron  and
      Chen, Hanjie",
    booktitle = "Proceedings of the 7th BlackboxNLP Workshop: Analyzing and Interpreting Neural Networks for NLP",
    month = nov,
    year = "2024",
    address = "Miami, Florida, US",
    publisher = "Association for Computational Linguistics",
    url = "https://aclanthology.org/2024.blackboxnlp-1.35/",
    doi = "10.18653/v1/2024.blackboxnlp-1.35",
    pages = "604--614",
    abstract = "Large Language Models (LLMs) often encounter conflicts between their learned, internal (parametric knowledge, PK) and external knowledge provided during inference (contextual knowledge, CK). Understanding how LLMs models prioritize one knowledge source over the other remains a challenge. In this paper, we propose a novel probing framework to explore the mechanisms governing the selection between PK and CK in LLMs. Using controlled prompts designed to contradict the model{'}s PK, we demonstrate that specific model activations are indicative of the knowledge source employed. We evaluate this framework on various LLMs of different sizes and demonstrate that mid-layer activations, particularly those related to relations in the input, are crucial in predicting knowledge source selection, paving the way for more reliable models capable of handling knowledge conflicts effectively."
}

\end{document}